\def\BibTeX{{\rm B\kern-.05em{\sc i\kern-.025em b}\kern-.08em
    T\kern-.1667em\lower.7ex\hbox{E}\kern-.125emX}}
\definecolor{mygray}{gray}{.9}
\definecolor{mygreen}{RGB}{206,246,243}
\definecolor{myorange}{RGB}{250,214,190}
\newtheorem{theorem}{Theorem}
\newtheorem{definition}{Definition}
\newtheorem{proposition}{Proposition}
\newtheorem{property}{Property}
\begin{document}

\title{Exploring the Paradigm Shift from Grounding to Skolemization for Complex Query Answering on Knowledge Graphs}


\author{Yuyin Lu, Hegang Chen, Shanrui Xie, Yanghui Rao*,~\IEEEmembership{Member,~IEEE}, Haoran Xie,~\IEEEmembership{Senior Member,~IEEE},\\ Fu Lee Wang,~\IEEEmembership{Senior Member,~IEEE}, Qing Li,~\IEEEmembership{Fellow,~IEEE}
\thanks{Yuyin Lu, Hegang Chen, Shanrui Xie, and Yanghui Rao are with the School of Computer Science and Engineering, Sun Yat-sen University, Guangzhou, China. Haoran Xie is with the School of Data Science, Lingnan University, Tuen Mun, New Territories, Hong Kong SAR. Fu Lee Wang is with the School of Science and Technology, Hong Kong Metropolitan University, Ho Man Tin, Kowloon, Hong Kong SAR. Qing Li is with the Department of Computing, The Hong Kong Polytechnic University, Hung Hom, Kowloon, Hong Kong SAR. * \textit{Corresponding author. E-mail: raoyangh@mail.sysu.edu.cn.}}
}



\maketitle

\begin{abstract}
Complex Query Answering (CQA) over incomplete Knowledge Graphs (KGs), typically formalized as reasoning with Existential First-Order predicate logic with one free variable (EFO\textsubscript{1}), faces a fundamental tradeoff between logic fidelity and computational efficiency. This work establishes a Grounding-Skolemization dichotomy to systematically analyze this challenge and motivate a paradigm shift in CQA. While Grounding-based methods inherently suffer from combinatorial explosion, most Skolemization-based methods neglect to explicitly model Skolem functions and compromise logical consistency. To address these limitations, we propose the Logic-constrained Vector Symbolic Architecture (LVSA), a neuro-symbolic framework that unifies a differentiable Skolemization module and a neural negator, as well as a logical constraint-driven optimization protocol to harmonize geometric and logical requirements. Theoretically, LVSA guarantees universality for all EFO\textsubscript{1} queries with low computational complexity. Empirically, it outperforms state-of-the-art Skolemization-based methods and reduces inference costs by orders of magnitude compared to Grounding-based baselines.
\end{abstract}

\begin{IEEEkeywords}
Complex Query Answering, Knowledge Graphs, Neuro-Symbolic AI, First-Order Logic.
\end{IEEEkeywords}

\section{Introduction}
\IEEEPARstart{K}{nowledge} Graphs (KGs) are graph-structured databases that represent facts as relationships between entities, providing critical support for information retrieval in various applications, such as recommendation systems~\cite{DBLP:conf/icde/Wu0SGKO23} and drug discovery~\cite{zhang2025comprehensive}. In practice, queries issued by users and intelligent agents range from simple one-hop lookups to complex queries formalized in Existential First-Order predicate logic with one free variable (EFO\textsubscript{1}), which incorporate logical operations including conjunction ($\wedge$), disjunction ($\vee$), negation ($\neg$), and existential quantifier ($\exists$), as exemplified in Fig. \ref{fig:kg-example}(i).

The inherent incompleteness of real-world KGs, combined with the structural complexity of EFO\textsubscript{1} queries, poses fundamental challenges to the accuracy and efficiency of Complex Query Answering (CQA). These challenges establish CQA as a crucial problem in knowledge representation and reasoning~\cite{DBLP:journals/pami/LiangMLLTWZLSH24}. Prior work conventionally categorizes CQA methods into symbolic, neural, and neuro-symbolic paradigms~\cite{zhang2021neural}, emphasizing that traditional symbolic approaches~\cite{DBLP:conf/nips/BordesUGWY13,DBLP:conf/iclr/SunDNT19} often fail on incomplete KGs due to their reliance on deterministic search. In contrast, neural~\cite{zhang2024conditional,DBLP:conf/nips/RenL20} and neuro-symbolic methods~\cite{DBLP:conf/iclr/ArakelyanDMC21,DBLP:conf/icml/Zhu0Z022} achieve superior performance through enhanced generalization capabilities. To enable a more systematic analysis, we introduce a formal logic-grounded taxonomy that vertically extends this conventional neuro-symbolic categorization. By identifying existentially quantified variable handling as the critical determinant of CQA efficacy, we partition existing CQA methods into \textbf{Grounding-based} and \textbf{Skolemization-based} paradigms. This \textbf{Grounding-Skolemization dichotomy} elucidates tradeoffs between computational efficiency and logic fidelity, providing a unified analytical lens to diagnose limitations in current CQA methods.

\begin{figure}[t]
\centering
\includegraphics[width=1.\linewidth]{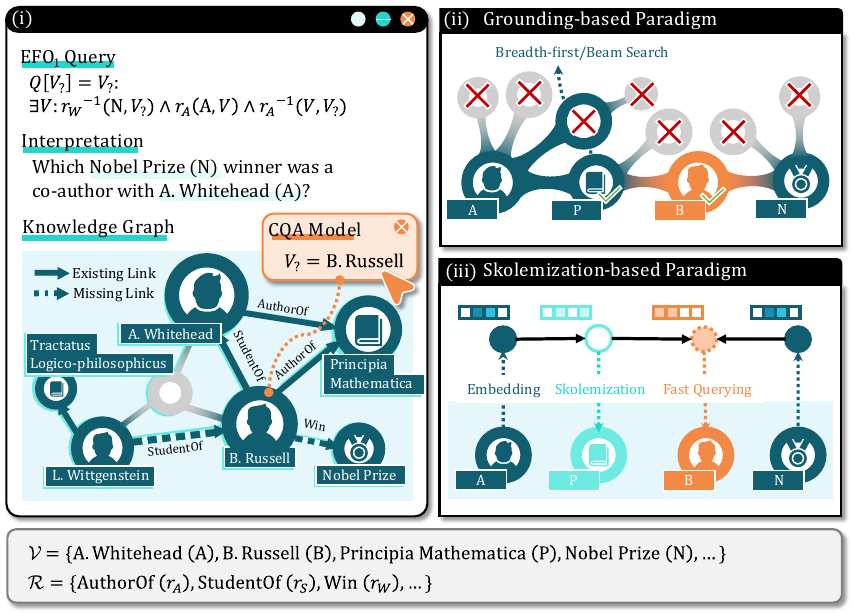}
\caption{\textbf{(i)} An example KG and a complex query in EFO\textsubscript{1} form. $-1$ denotes the inverse relation. \textbf{(ii)} Grounding-based reasoning workflow. \textbf{(iii)} Skolemization-based reasoning workflow.}
\label{fig:kg-example}
\vspace{-15pt}
\end{figure}

The Grounding-based paradigm operates by grounding EFO\textsubscript{1} formulas into propositional logic through candidate enumeration, as illustrated in Fig. \ref{fig:kg-example}(ii). Representative Grounding-based methods~\cite{DBLP:conf/iclr/ArakelyanDMC21,DBLP:conf/icml/Zhu0Z022,DBLP:conf/icml/BaiLLH23,DBLP:conf/iclr/Yin0S24} employ fuzzy logic~\cite{novak2012mathematical} to compositionally aggregate atomic formula truth values inferred by neural link predictors~\cite{trouillon2016complex,DBLP:conf/nips/ZhuZXT21,DBLP:conf/icde/Liang23}. While these neuro-symbolic methods demonstrate strong interpretability, their existentially quantified complexity grows exponentially with the number of existentially quantified variables. Although engineering optimizations~\cite{DBLP:conf/icde/YangHWY16,DBLP:conf/icde/AbuodaTA22} can reduce response time, they cannot overcome the fundamental scalability limitations inherent in this exponential complexity when applied to real-world KGs with complex query structures.

Conversely, the Skolemization-based paradigm replaces existentially quantified variables with neural Skolem functions (Fig. \ref{fig:kg-example}(iii)), achieving superior reasoning efficiency by relaxing the constraint of logical equivalence to satisfiability. Given the anticipated growth of KGs' scales~\cite{hofer2024construction}, this paradigm represents a critical research frontier. In practice, these neural Skolem functions are typically instantiated through geometric embeddings~\cite{DBLP:conf/nips/HamiltonBZJL18,DBLP:conf/nips/RenL20} and neural networks~\cite{DBLP:conf/iclr/WangSWS23,zhang2024conditional}. However, after the initial attempt in LogicE~\cite{luus2021logic}, subsequent works neglect the formal analysis of their logical underpinnings, particularly the failure to enforce the inherent constraints of Skolem functions and logical operations. This oversight reduces them to neural approximations that lack guaranteed logical coherence.

This work first conducts a formal analysis of existing CQA methods through the proposed Grounding-Skolemization dichotomy, examining their tradeoffs between logic fidelity and efficiency. Building on these insights, we propose \textbf{L}ogic-constrained \textbf{V}ector \textbf{S}ymbolic \textbf{A}rchitecture (\textbf{LVSA})\footnote{The source code of our LVSA is available at \url{https://anonymous.4open.science/r/LVSA-110F}.}, a novel neuro-symbolic approach that synergizes the logical framework based on Skolemization with the operational principles of Vector Symbolic Architectures (VSA)~\cite{schlegel2022comparison}, achieving the dual advantages of logic fidelity and computational efficiency. Specifically, LVSA employs sequential reasoning to ensure that all logical dependencies are resolved before inferring the embedding of each variable. By leveraging VSA's efficient algebraic operations over distributed vector representations, LVSA models relational projections and fundamental logical operations, enabling fast inference while maintaining transparency. Based on VSA's modular framework, we introduce a differentiable Skolemization module that explicitly enforces logical dependencies and a neural negator integrated with logic-driven regularization terms to address the challenge of modeling Skolem functions and negation in vector spaces.

Theoretically, we establish that our approach guarantees universal applicability to arbitrary EFO\textsubscript{1} queries while maintaining a proven computational complexity advantage. Empirically, comprehensive evaluations demonstrate significant improvements over state-of-the-art Skolemization-based methods in reasoning capability. Furthermore, efficiency analysis validates the superior scalability of the Skolemization-based paradigm compared to Grounding-based methods, with LVSA exhibiting additional efficiency gains due to its lightweight architecture. Our approach also shows stronger generalization capability in more challenging CQA scenarios with limited prior knowledge. Finally, through quantitative evaluation of approximated Skolem functions, we confirm that LVSA ensures high logic fidelity, reflected in its enhanced interpretability and causal traceability. Collectively, our theoretical analysis and the empirical success of LVSA substantiate the compelling promise of the Skolemization-based paradigm, paving the way for future research in this undervalued direction.

\section{Preliminary}
\label{sec:pre}
\subsection{Problem Formalization}
\noindent A {\bfseries Knowledge Graph (KG)} can be denoted as $\mathcal{G}=\left\{ \mathcal{V},\mathcal{R},\mathcal{E} \right\}$, where $\mathcal{V}$ and $\mathcal{R}$ are the sets of all entities and relations, respectively, and $\mathcal{E}\subseteq\mathcal{V}\times\mathcal{R}\times\mathcal{V}$ is a set of true assertions in the form of triples. Under the open-world assumption~\cite{russell2016artificial}, an assertion triple of $(e_h,r,e_t)\notin\mathcal{E}$ with $e_h,e_t\in\mathcal{V}$ and $r\in\mathcal{R}$ may also be true.

\noindent {\bfseries Complex Queries on KGs} can be formalized as \textbf{Existential First-Order predicate formulas with one free variable (EFO\textsubscript{1})}.  Formally, a valid EFO\textsubscript{1} query $Q$ consisting of an anchor entity set $\mathcal{V}_h\subseteq \mathcal{V}$, a set of existentially quantified variables $\mathcal{V}_\exists=\{V_1,V_2,\dots , V_k \}$, and a free variable $V_{?}$, can be defined in the Disjunctive Normal Form (DNF) as follows:
\begin{equation}
\label{eq:epo1-formula}
    Q[V_?]=V_?:\exists V_1,\dots,V_k:c_1 \vee c_2 \vee \dots \vee c_n,
\end{equation}
where each $c$ is a conjunctive query, that is, $c_i=q_{i1} \wedge q_{i2} \wedge \dots \wedge q_{im}$. And each literal  $q$ is an atomic formula or its negation, that is, $q_{ij}=r(V',V)$ or $q_{ij}=\neg r(V',V)$, with head $V'\in\mathcal{V}_h \cup \mathcal{V}_\exists$, tail $V\in \{V_?\}\cup \mathcal{V}_\exists$, and relation $r\in\mathcal{R}$. The answer set of the complex query $Q[V_?]$ is denoted as $A[Q]\subseteq\mathcal{V}$.

\subsection{First-order Predicate Logic Reasoning}
A fundamental challenge in first-order predicate logic reasoning lies in handling existentially quantified variables (hereafter, \textit{existential variables}). To address this problem, \textbf{Grounding} and \textbf{Skolemization} are two primary strategies, respectively based on Herbrand's Theorem and Skolemn's Theorem~\cite{van1967frege}.

\begin{definition}[Grounding]
Given an EFO\textsubscript{1} formula $\exists V: r(V', V)$, grounding converts it to a ground formula by enumerating all candidate entities for substitutions $\{e/V| e \in \mathcal{V}\}$:
\begin{equation}
    \exists V: r(V', V) \Rightarrow_G \bigvee_{e\in \mathcal{V}} r(V', e).
\end{equation}
\end{definition}

\begin{property}[Properties of Grounding]
Let $F$ be an EFO\textsubscript{1} formula and $F \Rightarrow_G F_G$: 
\begin{enumerate}[label=\alph*.]
    \item \textbf{Equivalence}: $F$ and $F_G$ are logically equivalent.
    \item \textbf{Complexity}: Let $F$ involve $k$ existential variables.
    \begin{enumerate}[label=(\roman*)]
        \item \textbf{Best-case}: When existential variables are independent, grounding performs $k$ separate substitutions, yielding linear complexity $\mathcal{O}(k \cdot |\mathcal{V}|)$.
        \item \textbf{Worst-case}: When existential variables form a chain of dependencies, grounding requires chained substitutions, yielding exponential complexity $\mathcal{O}(|\mathcal{V}|^{k})$.
\end{enumerate}
\end{enumerate}
\label{property:grounding}
\end{property}

\begin{definition}[Skolemization]
Given EFO\textsubscript{1} formulas $F_D=\exists V_D: r(V', V_D)$ and $F_I=\exists V_I: r(V_I, V)$ with dependent existential variable $V_D$ and independent existential variable $V_I$, Skolemization eliminates the existential quantifier by respectively introducing a Skolem function $f_r(V')$ and a Skolem constant $f_r()$, as follows:
\begin{equation}
    F_D \Rightarrow_S r\left(V',f_r(V')\right),~F_I \Rightarrow_S r\left(f_r(),V\right).
\end{equation}
\end{definition}

\begin{property}[Properties of Skolemization]
Let $F$ be an EFO\textsubscript{1} formula and $F \Rightarrow_S F_S$: 
\begin{enumerate}[label=\alph*.]
    \item \textbf{Satisfiability}: $F$ is satisfiable $\Leftrightarrow$ $F_S$ is satisfiable. $F_S \models F$ but the converse ($F \models F_S$) is not true in general.
    \item \textbf{Complexity}: When $F$ involves $k$ existential variables, the complexity of Skolemization is $\mathcal{O}(k)$.
\end{enumerate}
\label{property:skolem}
\end{property}

\noindent \textbf{Example 1} \textit{Consider the complex query} $Q[V_?]=V_?: \exists V: r_A(\text{A}, V) \wedge r_A^{-1}(V, V_?) \wedge r_W^{-1}(\text{N}, V_?)$ \textit{illustrated in Fig. \ref{fig:kg-example}. Grounding-based reasoning is defined by}
\begin{equation}
\begin{aligned}
    & Q[V_?] \Rightarrow_G Q_G[V_?]= V_?: \\
    & \left( r_A(\text{A}, \text{P}/V) \wedge r_A^{-1}(\text{P}/V, V_?) \wedge r_W^{-1}(\text{N}, V_?) \right) \vee  \\
    & \left( r_A(\text{A}, \text{B}/V) \wedge r_A^{-1}(\text{B}/V, V_?) \wedge r_W^{-1}(\text{N}, V_?) \right) \vee \cdots
\end{aligned}
\end{equation}
\textit{In contrast, Skolemization-based reasoning is defined by}
\begin{equation}
\begin{aligned}
    & Q[V_?] \Rightarrow_S Q_S[V_?] = V_?: \\
    & r_A(\text{A}, f_{r_A}(\text{A})) \wedge r_A^{-1}(f_{r_A}(\text{A}), V_?) \wedge r_W^{-1}(\text{N}, V_?).
\end{aligned}
\end{equation}

\section{The Grounding-Skolemization Dichotomy}
\label{sec:dichotomy}
This section presents a systematic analysis of current CQA methods through the lens of the Grounding-Skolemization dichotomy. By examining their algorithmic ideas and technical designs, we identify prevailing challenges and  propose a promising approach to guide future research in the field.

\subsection{Algorithmic Ideas}
\noindent \textbf{Grounding-based Methods}~\cite{DBLP:conf/iclr/ArakelyanDMC21,DBLP:conf/icml/Zhu0Z022,DBLP:conf/icml/BaiLLH23,DBLP:conf/iclr/Yin0S24,DBLP:conf/www/Nguyen0SH025,DBLP:conf/nips/GalkinZ00Z24} ground EFO\textsubscript{1} queries into propositional logic formulas and reduce the truth value of each formula to an aggregation of its constituent literals. The core modules of most Grounding-based methods comprise a neural link predictor $\phi: \mathcal{V}\times\mathcal{R}\times\mathcal{V} \mapsto [0, 1]$ that estimates the truth values of literals and a set of fuzzy logic operators $\{ \top: \wedge, \bot: \vee, \text{Neg}: \neg\}$ for truth value composition, where $\top$ and $\bot$ are specific t-norm/t-conorm pairs~\cite{novak2012mathematical}, and $\text{Neg}$ is a negator. 

Given the equivalence of Grounding (Property \ref{property:grounding}a) and the truth functionality of propositional logic~\cite{cook2009dictionary}, the grounding-reduction process preserves logical equivalence. However, as shown in Property \ref{property:grounding}b, most Grounding-based methods~\cite{DBLP:conf/www/Nguyen0SH025,DBLP:conf/icml/Zhu0Z022,DBLP:conf/icml/BaiLLH23,DBLP:conf/iclr/Yin0S24} suffer from exponential complexity due to combinatorial explosion, rendering them impractical for queries with $k > 3$ in various KGs~\cite{DBLP:conf/kdd/RenDDCZLS22}. To alleviate this problem, CQD-Beam~\cite{DBLP:conf/iclr/ArakelyanDMC21} employs beam search to prune the size of the search space from $|\mathcal{V}|^k$ to $b^k \cdot |\mathcal{V}|$ with the beam size $b$. Yet, this greedy strategy trades completeness and accuracy for tractability, which is quantified in Section \ref{sec:efficiency-exp}.

\noindent \textbf{Skolemization-based Methods}~\cite{DBLP:conf/nips/HamiltonBZJL18,DBLP:conf/naacl/BaiWZS22,DBLP:conf/iclr/WangSWS23,zhang2024conditional,DBLP:conf/nips/RenL20,DBLP:conf/nips/ZhangWCJW21,DBLP:conf/ijcai/ZhuoPWW0LW025} project entities into a particular embedding space and then define logical operations as transformations within this space. Formally, we denote the embedding function as $\varphi$ and transformation functions as $\left\{g_P, g_\wedge, g_\vee, g_\neg, g_\exists \right\}$, where $g_P, g_\wedge, g_\vee, g_\neg$ are respectively transformation functions for relational projection, conjunction, disjunction, negation, and $g_\exists$ is the neural Skolem function. 
By recursively applying these transformations according to a query's logical structure, these methods derive the query embedding $\varphi(Q)$, thereby facilitating fast querying based on the similarity metric ($\phi_E$) defined in the embedding space, where the score of a candidate answer $e\in\mathcal{V}$ is quantified by $E\left(Q_S[e]\right)=\phi_E\left(\varphi(Q),\varphi(e)\right)$.

As formalized in Property \ref{property:skolem}, Skolemization enhances reasoning efficiency by relaxing logical equivalence to satisfiability, guaranteeing the discovery of at least one valid witness. However, current Skolemization-based methods often underperform their Grounding-based counterparts on standard benchmarks~\cite{DBLP:conf/nips/RenL20}, due to this logical relaxation and the difficulty in approximating Skolem functions. Furthermore, existing Skolemization-based methods typically rely on complex neural components (e.g., geometric embeddings~\cite{DBLP:conf/nips/ZhangWCJW21} and deep networks~\cite{zhang2024conditional}), which compromise reasoning transparency. Despite these limitations, new benchmarks highlight the critical importance of CQA in knowledge-sparse scenarios~\cite{greguccicomplex}. Our experimental analysis in Section \ref{sec:new-benchmark-results} confirms that Skolemization-based methods possess a superior generalization potential in such settings. Given their inherent efficiency and generalization capability, we contend that the Skolemization-based paradigm constitutes a vital and promising research direction.

\noindent \textbf{Example 1 (cont.)} \textit{Considering }$V_?=\text{B}$\textit{, the reasoning algorithm of Grounding-based methods is formalized as follows:}
\begin{equation}
\begin{aligned}
    & E\left(Q_G[\text{B}]\right) = \\
    & \left( \phi(\text{A},r_A,\text{P})~\top~ \phi(\text{P},r_A^{-1},\text{B})~\top~\phi(\text{N},r_W^{-1},\text{B}) \right) \bot \\
    & \left( \phi(\text{A},r_A,\text{B})~\top~ \phi(\text{B},r_A^{-1},\text{B})~\top~\phi(\text{N},r_W^{-1},\text{B}) \right) \bot \cdots
\end{aligned}
\end{equation}

\textit{The reasoning algorithm of Skolemization-based methods is defined by}
\begin{equation}
\begin{aligned}
    & f_{r_A}(\text{A}) \approx g_\exists(\text{A}~|~Q), \\
    &\varphi(Q) = g_\wedge \left( g_P\left( f_{r_A}(\text{A})~|~r_A^{-1} \right), g_P\left( \text{N}~|~r_W^{-1} \right) \right), \\ 
    & E\left(Q_S[\text{B}]\right) = \phi_E\left(\varphi(Q), \varphi(\text{B})\right).
\end{aligned}
\end{equation}

\begin{figure}[t]
  \centering
  \includegraphics[width=1\linewidth]{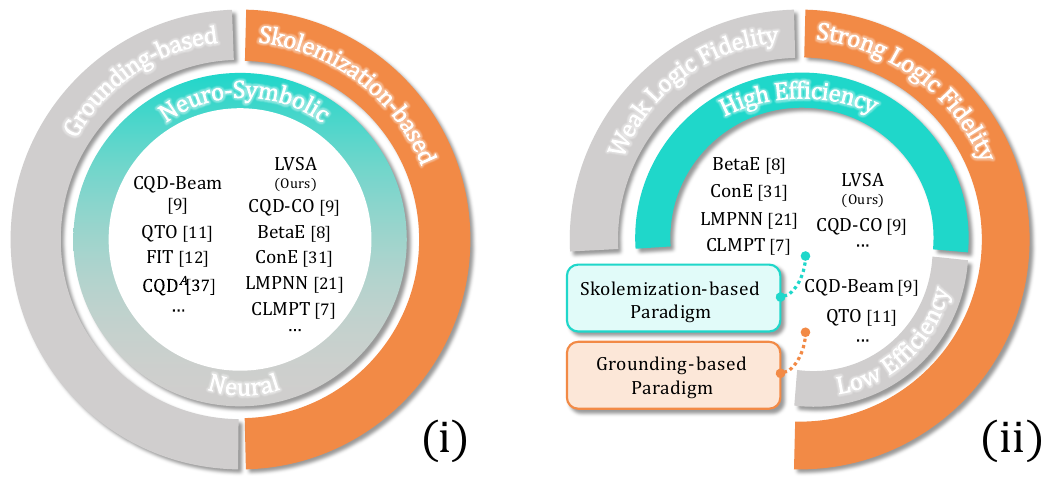}
  \caption{\textbf{(i)} Taxonomy of CQA methods along the Grounding-Skolemization and Neuro-Symbolic dimensions. \textbf{(ii)} Comparison of Skolemization-based and Grounding-based CQA methods in efficiency and logic fidelity.}
  \label{fig:CQA-taxonomy}
\end{figure}

\begin{table*}[t]
  \caption{Summary of representative Skolemization-based methods. \textbf{Bold} entries denote symbolic operations. $\varphi(e)$ and $\varphi(r)$ denote the embedding of an entity and a relation, respectively, and $d$ is the embedding dimension.}
  \label{tab:architectures}
  \centering
  \resizebox{1\linewidth}{!}{%
  \begin{tabular}{c|c|c|c|c|c|c|c}
    \toprule[1.5pt]
    \multirow{2}{*}{\textbf{Model}}  & \textbf{Embedding Space} & \textbf{Relation} & \textbf{Existential} & \multirow{2}{*}{\textbf{Conjunction}} & \multirow{2}{*}{\textbf{Disjunction}} & \multirow{2}{*}{\textbf{Negation}} & \textbf{Similarity} \\
    & Entity: $\varphi(e)$~Relation: $\varphi(r)$ & \textbf{Projection} & \textbf{Quantification} & & & & \textbf{/Distance} \\
    \midrule
    \midrule
    \multirow{2}{*}{Q2B~\cite{DBLP:conf/iclr/RenHL20}} & \textit{vector}: $\varphi(e)\in\mathbb{R}^{d}$ & \multirow{2}{*}{\textbf{addition}} & \multirow{2}{*}{box} & normalized addition & \multirow{2}{*}{\textbf{DNF}} & \multirow{2}{*}{\XSolidBrush} & entity-to-box \\
    & \textit{box}: $\varphi(r)=(\text{Cen}(r),\text{Off}(r))\in\mathbb{R}^{2d}$ & & & + neural network & & & distance \\
    \midrule
    \multirow{2}{*}{BetaE~\cite{DBLP:conf/nips/RenL20}} & \textit{beta distribution}: & neural & beta & normalized addition & \multirow{2}{*}{\textbf{DNF}} & \multirow{2}{*}{\textbf{reciprocal}} & KL \\
    & $\varphi(e)=\text{Beta}(\alpha_e,\beta_e)\in\mathbb{R}^{2d}$ & network & distribution & + neural network & & & divergency \\
    \midrule
    \multirow{2}{*}{ConE~\cite{DBLP:conf/nips/ZhangWCJW21}} & \textit{cone}: $\varphi(e)=(\theta_{\text{ax},e},\theta_{\text{ap},e})\in[0,2\pi]^{2d}$ & addition + & \multirow{2}{*}{cone} & normalized addition & \multirow{2}{*}{\textbf{DNF}} & \multirow{2}{*}{\textbf{complement}} & cone \\
    & \textit{cone}: $\varphi(r)=(\theta_{\text{ax},r},\theta_{\text{ap},r})\in[0,2\pi]^{2d}$ & neural network & & + neural network & & & distance
    \\
    \midrule
    \multirow{2}{*}{FuzzQE~\cite{DBLP:conf/aaai/ChenHS22}} & \multirow{2}{*}{\textit{fuzzy set}: $\varphi(e)\in[0,1]^{d}$} & \multirow{2}{*}{neural network} & \multirow{2}{*}{fuzzy set} & \multirow{2}{*}{\textbf{t-norm}} & \multirow{2}{*}{\textbf{t-conorm}} & \multirow{2}{*}{\textbf{complement}} & cosine \\
    & & & & & & & similarity  \\
    \midrule
    \multirow{2}{*}{LogicE~\cite{luus2021logic}} & \textit{fuzzy set}: $\varphi(e)=(l_e,u_e)\in[0,1]^{2d}$ & \multirow{2}{*}{neural network} & \multirow{2}{*}{fuzzy set} & \multirow{2}{*}{\textbf{t-norm}} & \multirow{2}{*}{\textbf{t-conorm}} & \multirow{2}{*}{\textbf{complement}} & set \\
    & \textit{vector}: $\varphi(r)\in\mathbb{R}^{d}$ & & & & & & distance \\
    \midrule
    \midrule
    \multirow{2}{*}{CQD-CO~\cite{DBLP:conf/iclr/ArakelyanDMC21}} & \textit{complex vector}: $\varphi(e)\in\mathbb{C}^{2d}$ & \textbf{Hadamard} & complex & \multirow{2}{*}{\textbf{t-norm}} & \multirow{2}{*}{\textbf{t-conorm}} & \multirow{2}{*}{\textbf{complement}} & cosine \\
    & \textit{complex vector}: $\varphi(r)\in\mathbb{C}^{2d}$ & \textbf{product} & vector & & & & similarity \\
    \midrule
    \multirow{2}{*}{PPMT~\cite{DBLP:conf/ijcai/ZhuoPWW0LW025}} & \textit{vector}: $\varphi(e)\in\mathbb{R}^{d}$ & \multirow{2}{*}{neural network} & \multirow{2}{*}{neural network} & \multirow{2}{*}{neural network} & \multirow{2}{*}{\textbf{DNF}} & \multirow{2}{*}{\XSolidBrush} & cosine \\
    & \textit{vector}: $\varphi(r)\in\mathbb{R}^{d}$ & & & & &  & similarity \\
    \midrule
    LMPNN~\cite{DBLP:conf/iclr/WangSWS23} & \textit{complex vector}: $\varphi(e)\in\mathbb{C}^{2d}$ & \multirow{2}{*}{neural network} & \multirow{2}{*}{neural network} & \multirow{2}{*}{neural network} & \multirow{2}{*}{\textbf{DNF}} & neural & cosine \\
    CLMPT~\cite{zhang2024conditional} & \textit{complex vector}: $\varphi(r)\in\mathbb{C}^{2d}$ & & & & & network & similarity \\
    \midrule
    \midrule
    \multirow{2}{*}{LVSA (Ours)} & \textit{complex vector}: $\varphi(e)\in\mathbb{C}^{2d}$ & \textbf{Hadamard} & \multirow{2}{*}{neural network} & \multirow{2}{*}{\textbf{normalized addition}} & \multirow{2}{*}{\textbf{DNF}} & neural & cosine \\
    & \textit{complex vector}: $\varphi(r)\in\mathbb{C}^{2d}$ & \textbf{product} &  & & & network & similarity \\
    \bottomrule[1.5pt]
  \end{tabular}
  }
\end{table*}

\subsection{Technical Designs}
\noindent\textbf{Grounding-based Methods:} Pioneered by Arakelyan et al.~\cite{DBLP:conf/iclr/ArakelyanDMC21}, the Grounding-based paradigm operates through atomic formula evaluation via neural link predictors~\cite{DBLP:conf/nips/ZhuZXT21,trouillon2016complex,DBLP:conf/icml/NickelTK11} and truth value aggregation using fuzzy logic operators (e.g., G\"odel t-norms~\cite{novak2012mathematical}). Subsequent advancements include GNN-QE~\cite{DBLP:conf/icml/Zhu0Z022} and ULTRAQ~\cite{DBLP:conf/nips/GalkinZ00Z24}, which integrate GNN-based link predictors, and CQD$^\mathcal{A}$~\cite{DBLP:conf/nips/ArakelyanMDCA23}, which enhances reasoning through neural adapter modules. Concurrently, SpherE~\cite{DBLP:conf/www/Nguyen0SH025} investigates the efficacy of diverse geometric embedding-based link predictors within the Grounding-based CQA framework. In parallel, Lu et al.~\cite{luefficient} replace classical fuzzy operators with a fuzzy system, improving the capability of uncertainty handling through fuzzy rules. From the perspective of grounding algorithms, QTO~\cite{DBLP:conf/icml/BaiLLH23} and FIT~\cite{DBLP:conf/iclr/Yin0S24} guarantee global optimality through exhaustive candidate grounding, albeit at the cost of exponential time complexity. Instead, CQD-Beam~\cite{DBLP:conf/iclr/ArakelyanDMC21} employs beam search to prune the candidate set, trading reasoning performance and logical completeness for tractability.

\noindent\textbf{Skolemization-based Methods:}
Although most CQA methods based on query embedding learning lack formal analysis of their logical foundations, their handling of existential variables can be conceptually aligned with neural approximations of Skolem functions. Current approaches are broadly categorized into geometric embedding-based and neural architecture-based methods. Table \ref{tab:architectures} compares the model architectures of representative Skolemization-based methods.

Geometric embedding-based methods for CQA design geometrically structured embedding spaces (e.g., cones~\cite{DBLP:conf/nips/ZhangWCJW21}, boxes~\cite{DBLP:conf/iclr/RenHL20}, and beta distributions~\cite{DBLP:conf/nips/RenL20}) based on set-theoretic principles~\cite{vilnis2021geometric}, which encode logical inductive biases through geometric priors. Specifically, these methods represent existential variables as geometric embeddings of satisfiable solution sets and implement logical operators through algebraic transformations defined over the geometric space. For instance, ConE~\cite{DBLP:conf/nips/ZhangWCJW21} employs cone-sector embeddings and models logical conjunction and negation via cone intersections and complements, respectively. FuzzQE~\cite{DBLP:conf/aaai/ChenHS22} and LogicE~\cite{luus2021logic} map (non-)variables to fuzzy set embeddings and model logical operations through fuzzy set operators in the embedding space. However, geometric embedding-based methods rely on intricate symbolic-geometric operations, leading to optimization instability and high computational costs. Moreover, they suffer from performance degradation due to misalignment between geometric priors and actual logical constraints.

In contrast, neural architecture-based methods adopt simple vector embeddings and approximate Skolem functions through neural networks. CQD-CO~\cite{DBLP:conf/iclr/ArakelyanDMC21} suffers from oversimplified Skolem approximations through random initialization of existential variable embeddings with na\"ive constraint optimization. Subsequently, LMPNN~\cite{DBLP:conf/iclr/WangSWS23} and CLMPT~\cite{zhang2024conditional} leverage Graph Isomorphism Networks (GINs)~\cite{DBLP:conf/iclr/XuHLJ19} to generate embeddings for existential variables via message passing over query graphs. Another direction explored by PPMT~\cite{DBLP:conf/ijcai/ZhuoPWW0LW025} serializes the query graphs and employs Language Models (LMs) to derive query embeddings. However, these LM-based methods show no clear advantage over GNN-based alternatives and lack support for logical negation, revealing notable limitations. Although GNN-based and LM-based methods achieve performance gains via deep neural networks, their black-box reasoning mechanisms fundamentally obscure logical transparency and prevent explicit constraint enforcement.

\subsection{Discussions}
\noindent\textbf{Relation to Neuro-Symbolic Taxonomy:}
Although conventional surveys~\cite{zhang2021neural} employ a neuro-symbolic taxonomy to analyze CQA methods, we contend that the boundaries within this taxonomy are often blurred, as illustrated in Fig.~\ref{fig:CQA-taxonomy}(i). For instance, while prior studies typically categorize CQA methods based on geometric embeddings as purely neural approaches, our analysis reveals their underappreciated symbolic dimensions and hybrid nature (Table \ref{tab:architectures}). This limitation underscores the critical value of the proposed Grounding-Skolemization dichotomy for systematically analyzing both current and emerging CQA methods.

\noindent\textbf{Logic Fidelity-Efficiency Tradeoff:}
Our analysis of both algorithmic ideas and technical designs exposes a fundamental tradeoff between logic fidelity and computational efficiency in CQA methods, as illustrated in Fig.~\ref{fig:CQA-taxonomy}(ii). At the paradigm level, Skolemization-based methods achieve greater efficiency when reasoning over existential variables by relaxing logical equivalence, whereas Grounding-based methods favor completeness through exhaustive search. At the method level, even within the Skolemization-based paradigm, different methods exhibit considerable variation. Specifically, we attribute their differences in efficiency primarily to model complexity, while differences in logic fidelity are seen in their symbolic formalism, transparency, and interpretability.

\noindent\textbf{A Promising Solution:}
Overall, we suggest that the current Skolemization-based paradigm is undervalued in terms of its generalization capability and efficiency, and that previous implementations based on geometric embeddings or deep neural networks have not fully realized its potential.
To enhance reasoning efficiency with logic fidelity, we propose the \textbf{L}ogic-constrained \textbf{V}ector \textbf{S}ymbolic \textbf{A}rchitecture \textbf{(LVSA)}, a novel neuro-symbolic solution for the Skolemization-based CQA paradigm. LVSA performs transparent, step-by-step reasoning where the embedding of each variable is explicitly derived from the resolved results of its predecessors, ensuring full causal traceability. To increase symbolic formalism and reasoning transparency, LVSA adopts the Vector Symbolic Architecture (VSA) as its computational foundation, which provides powerful algebraic primitives on simple vector embeddings~\cite{schlegel2022comparison}. The modular design of VSA enables the integration of logic-constrained neural components, including a neural negator and a differentiable Skolemization module. 

\section{Methodology}
\label{sec:method}
The key modules of our LVSA can also be summarized as $\left( \varphi, \left\{g_P, g_\wedge, g_\vee, g_\neg, g_\exists \right\}, \phi_E \right)$. Firstly, we introduce the basic vector symbolic encoding (Section \ref{sec:basic-symbolic}), followed by a neural negator and a differentiable Skolemization module (Section \ref{sec:multi-head-skolem}). Then, we present a logic-driven optimization protocol for preserving logical constraints (Section \ref{sec:logical-constraint-op}).

\subsection{Basic Vector Symbolic Encoding}
\label{sec:basic-symbolic}
\noindent \textbf{Sequential Reasoning Strategy:} Any EFO\textsubscript{1} query can be represented as a Directed Acyclic Graph (DAG), where nodes correspond to anchor entities or variables and edges encode relational projections~\cite{DBLP:conf/iclr/Yin0S24}. While GNN-based methods update all node embeddings simultaneously through message passing over the entire query graph, LVSA executes reasoning sequentially by following a topological ordering of nodes~\cite{donald1999art}. We employ Kahn's algorithm~\cite{DBLP:journals/cacm/Kahn62} for topological sorting, which iteratively selects nodes ($V$) with zero in-degree ($d_V=0$) and removes them from the graph. This strategy ensures that all logical dependencies of a variable are resolved before its embedding is computed, thereby preserving logic fidelity throughout the multi-step inference process.

\noindent \textbf{Example 2} \textit{Consider the complex query illustrated in Fig. \ref{fig:topo-example}. LVSA performs reasoning following the topological order $(V_1,h,V_2,V_?)$: it first infers embeddings of $V_1$ and the anchor entity $h$, then derives the embedding of $V_2$ based on these intermediate results. This sequential approach maintains transparent causal relationships at each step. In contrast, GNN-based methods update the embeddings of $V_1$, $V_2$, and $V_?$ simultaneously, leading to entangled representations without explicit causal dependencies. }

\begin{figure}
\centering
\includegraphics[width=1.\linewidth]{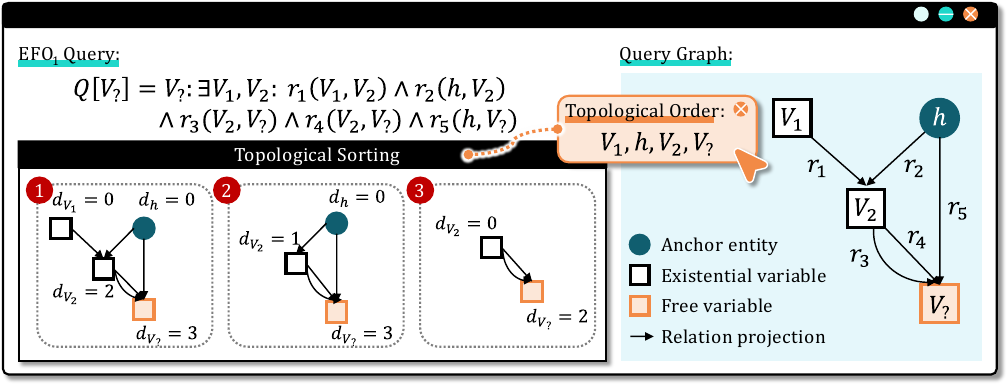}
\caption{Diagram of the query graph and topological sorting workflow for an example query.}
\label{fig:topo-example}
\end{figure}

\begin{figure*}
  \centering
  \includegraphics[width=1.\textwidth]{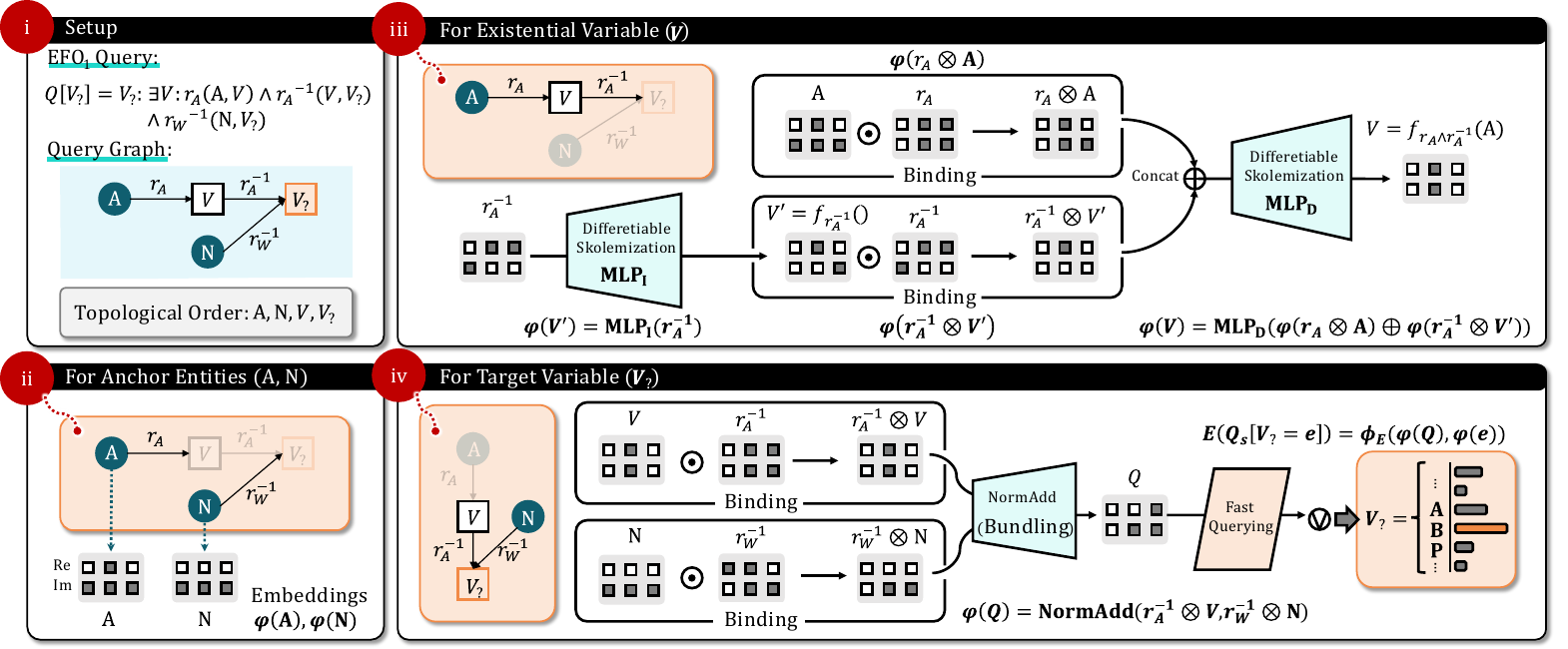}
  \caption{LVSA's inference process for the sample complex query $Q[V_?]=V_?: \exists V: r_A(\text{A}, V) \wedge r_A^{-1}(V, V_?) \wedge r_W^{-1}(\text{N}, V_?)$.}
  \label{fig:LVSA-sample}
\end{figure*}

\noindent \textbf{Embedding Function ($\varphi$):} Building on the success of complex-valued vectors in both Knowledge Graph Embedding (KGE) models~\cite{trouillon2016complex,DBLP:conf/aaai/NickelRP16,DBLP:conf/iclr/SunDNT19} and VSA~\cite{plate1995holographic,plate2003holographic}, we map entities into a complex vector space. For an entity $e\in\mathcal{V}$, its complex-valued embedding is defined by
\begin{equation}
    \varphi(e)\triangleq\text{Re}(e)+\text{Im}(e)i,
\label{eq:complex-emb}
\end{equation}
where the trainable real component $\text{Re}(e) \in \mathbb{R}^{d}$ and imaginary component $\text{Im}(e) \in \mathbb{R}^{d}$  replace traditional VSA's random vectors to capture semantic correlation among entities. Furthermore, we define the concatenated vector representation of $\varphi(e)$ as $v_\varphi(e)\triangleq \left[\text{Re}(e);\text{Im}(e)\right]$.

\noindent \textbf{Relational Projection ($g_P$):} Through projecting relations into the same embedding space as entities, where each relation $r \in \mathcal{R}$ is represented as $\varphi(r) \triangleq \text{Re}(r) + \text{Im}(r)i$, we can define relational projection by the \textbf{Binding ($\otimes$)} operator, which is implemented as Hadamard product ($\odot$) between embeddings of the relation ($r\in\mathcal{R}$) and the head ($V'$):
\begin{equation}
\begin{aligned}
    \varphi(r \otimes V') & \triangleq \text{Re}(r \otimes V')+\text{Im}(r \otimes V')i \\
    & = \left[ \text{Re}(r)\odot\text{Re}(V') - \text{Im}(r)\odot\text{Im}(V') \right] \\
    & + \left[ \text{Re}(r)\odot\text{Im}(V') + \text{Im}(r)\odot\text{Re}(V') \right]i.
\end{aligned}
\end{equation}

Intuitively, binding emphasizes specific attributes of each entity under different relations, e.g., $\text{StudentOf}\otimes\text{A}\approx\text{B}$ and $\text{AuthorOf}\otimes\text{A}\approx\text{P}$ distinguish A. Whitehead's roles as an advisor versus an author.

\noindent \textbf{Logical Conjunction ($g_\wedge$):} Inspired by VSA, we model the conjunction operator through \textbf{Bundling ($+$)}. Notably, bundling inherently preserves geometric proximity between output embeddings (superposition) and input embeddings when it satisfies the commutativity and associativity axioms of logical conjunction. Given a conjunctive query $Q_\wedge = r_1(V'_1,V)\wedge\dots\wedge r_N(V'_N,V)$, its embedding is defined as follows:
\begin{equation}
\label{eq:conjunction}
\begin{aligned}
    \varphi(Q_\wedge) &\triangleq \text{Re}(Q_\wedge)+\text{Im}(Q_\wedge)i \\ 
    &=\varphi(r_1 \otimes V'_1 + \dots + r_N \otimes V'_N).
\end{aligned}
\end{equation}

To enforce magnitude consistency between inputs and the output, bundling is implemented as normalized addition (NormAdd), which is formalized as
\begin{align}
\label{eq:normadd}
&
\begin{aligned}
    \varphi(Q_\wedge)
    &= \text{NormAdd}\left[ \varphi(r_1 \otimes V'_1),\dots, \varphi(r_N \otimes V'_N)\right] \\
    &= \frac{L}{\tilde{L}}\cdot \tilde{\varphi}(Q_\wedge),
\end{aligned} \\
&
\begin{aligned}
    \tilde{\varphi}(Q_\wedge) &\triangleq \tilde{\text{Re}}(Q_\wedge)+\tilde{\text{Im}}(Q_\wedge)i \\
    & = \frac{1}{N}\left[ \sum_{j=1}^{N}\text{Re}(r_j\otimes V_j^{\prime}) + \sum_{j=1}^{N}\text{Im}(r_j\otimes V_j^{\prime})i \right], 
\end{aligned}
\end{align}
where $\tilde{L}=\frac{1}{2d} \Vert v_{\tilde{\varphi}}(Q_\wedge)\Vert_1$ is the average norm of the unnormalized vector $v_{\tilde{\varphi}}(Q_\wedge)$, and $L = \frac{1}{2d \cdot N} \sum_{j=1}^N \Vert v_{\varphi}(r_j\otimes V_{j}^{\prime})\Vert_1$ is the target norm, calculated as the mean norm of input vectors $\{v_{\varphi}(r_j\otimes V_{j}^{\prime})\}_{j=1...N}$.

\noindent \textbf{Logical Disjunction ($g_\vee$):} DNF can decompose the answer set of a disjunctive query into a union of the answer sets of its conjunctive sub-queries, i.e., $A[Q] = A[c_1] \cup \dots \cup A[c_n]$ for $Q[V_?]$.

\noindent \textbf{Similarity Measure ($\phi_E$):} We adopt the real part of the Hermitian inner product as the similarity between complex-valued vectors. Formally, the predicted score of a candidate entity $e\in\mathcal{V}$ for the complex query $Q[V_?]$ is defined by
\begin{equation}
\begin{aligned}
    E(Q_S[e]) &= \text{Re}\left(\langle \varphi(Q), \overline{\varphi(e)} \rangle \right) \\ 
    &= \text{Re}\left( Q \right) \cdot \text{Re}\left( e \right) + \text{Im}\left( Q \right) \cdot \text{Im}\left( e \right),
\end{aligned}
\end{equation}
where $\overline{\varphi(e)}=\text{Re}(e) - \text{Im}(e)i$ is the conjugate of $\varphi(e)$.

\subsection{Neural Negator and Differentiable Skolemization}
\label{sec:multi-head-skolem}
\noindent \textbf{Logical Negation ($g_\neg$):} Modeling semantic inversion in the embedding space while satisfying logical axioms of negation presents fundamental challenges. To address this, we propose a neural negator ($\text{MLP}_\text{N}$) trained under logic-driven regularization (detailed in Section \ref{sec:logical-constraint-op}). As negation is more practical when taking a conjunction together~\cite{DBLP:conf/nips/RenL20}, we define the general form of complex queries involving logical negation as $Q_\neg = Q_\wedge \wedge \neg r_{N+1}(V'_{N+1},V)$. Formally, the embedding of $Q_\neg$ is computed by
\begin{equation}
\begin{aligned}
    & \varphi(\neg(r_{N+1} \otimes V^\prime_{N+1})) = \text{MLP}_\text{N}\left(v_\varphi(Q_\wedge) \oplus v_\varphi(r_{N+1} \otimes V^\prime_{N+1})\right), \\
    & \varphi(Q_\neg) = \text{NormAdd}\left[ \varphi(Q_\wedge), \varphi(\neg(r_{N+1} \otimes V^\prime_{N+1})) \right],
\end{aligned}
\label{eq:negation}
\end{equation}
where $\text{MLP}_\text{N}(\cdot)$ is a feedforward neural network and $\oplus$ denotes embedding concatenation.

\noindent \textbf{Existential Quantification ($g_\exists$):} A key contribution of our LVSA lies in explicitly modeling Skolem functions under first-order logical dependencies, which inherently strengthens logic fidelity while theoretically guaranteeing universal expressivity for EFO\textsubscript{1} queries.

For an independent existential variable $V_I$, as in $\exists V_I: r(V_I, V) \Rightarrow_S r\left( f_r(), V \right)$, we generate its embedding via a relation-aware neural network ($\text{MLP}_\text{I}$):
\begin{equation}
    \varphi(V_I) = \text{MLP}_\text{I}\left( v_\varphi(r) \right).
\label{eq:independ-var}
\end{equation}

For a dependent existential variable $V_D$, as in $F=\exists V_D: r_1(V', V_D)\wedge r_2(V_D, V)$, we first transform $F$ to address the logical constraints posed by $r_2$ and the unknown variable $V$. Specifically, we introduce an independent existential variable $V_I$ and the inverse relation $r_2^{-1}$:
\begin{equation}
    F \Rightarrow_* F^* = \exists V_D, V_I: r_1(V', V_D)\wedge r_2^{-1}(V_I, V_D).
\end{equation}

Having obtained $F^* \Rightarrow_S F_S^* =  r_1\left(V', f_{r_1\wedge r_2^{-1}}(V')\right) \wedge r_2^{-1}\left(f_{r_2^{-1}}(), f_{r_1\wedge r_2^{-1}}(V')\right)$, we then infer $\varphi(V_D)$ by
\begin{equation}
    \varphi(V_D) = \text{MLP}_\text{D}\left( v_\varphi(r_1\otimes V') \oplus v_\varphi(r_2^{-1}\otimes V_I) \right).
\label{eq:depend-var}
\end{equation}

\noindent \textbf{Example 1 (cont.)} \textit{Fig. \ref{fig:LVSA-sample} illustrates the reasoning workflow of our LVSA for the sample complex query. Panel (i) shows the query graph and reasoning order. Panel (ii) depicts the mapping of anchor entities into the complex-valued embedding space. Panel (iii) presents the subsequent reasoning step for the existential variable, highlighting the workflow of the differentiable Skolemization module. Panel (iv) demonstrates the inference of the target variable, which involves bundling operations and fast querying based on the similarity metric.}

\subsection{Logic-Driven Optimization Protocol}
\label{sec:logical-constraint-op}
The preceding sections have detailed the architecture and reasoning workflow of our LVSA. We now introduce the logic-driven optimization protocol for model training. The primary training objective for LVSA is a cross-entropy loss. Given a training batch $\mathcal{B}=\{(Q,a)\}$, the loss enforces higher prediction scores for answer entities $\{a\in A[Q]\}_{(Q,a)\in\mathcal{B}}$ compared to negative samples:
\begin{equation}
\begin{aligned}
\mathcal{L}_{\text{CE}}(\mathcal{B}) &= \frac{1}{|\mathcal{B}|}\sum_{(Q,a)\in\mathcal{B}} H(Q,\mathcal{V}) \\
&= \frac{1}{|\mathcal{B}|}\sum_{(Q,a)\in\mathcal{B}} [- E\left(Q_S[a]\right) + \ln \sum_{e\in \mathcal{V}} \exp \left(E\left(Q_S[e]\right) \right)].
\end{aligned}
\end{equation}

To ensure logical consistency in the neural negator, we impose two key logical properties: \textbf{Satisfiability} and \textbf{Logical Axioms}. The satisfiability condition requires that when $V = t$ makes $Q_\neg \equiv \text{True}$, the negator should infer $\neg r_{N+1}(V'_{N+1},t)$ to be true. In addition, the negator must comply with two fundamental logical axioms: the double negation law ($\neg\neg x \equiv x$) and the contradiction law ($\neg x \wedge x \equiv False$). Letting $q[V]= r_{N+1}(V'_{N+1},V)$, these logical constraints are incorporated into the following loss functions:
\begin{align}
    & \begin{aligned}
    \mathcal{L}_{\text{NS}}(\mathcal{B}_{\neg};\alpha) = - \frac{\alpha}{|\mathcal{B}_{\neg}|}  \sum_{Q_{\neg}\in\mathcal{B}_{\neg}}\ln\sigma\left( E\left( \neg q[t]\right) \right), \\
    \end{aligned}
    \\
    & \begin{aligned}
       \mathcal{L}_{\text{NL}}(\mathcal{B}_{\neg};\beta) =  \frac{\beta}{|\mathcal{B}_{\neg}|}\sum_{Q_{\neg}\in\mathcal{B}_{\neg}} 
 & \text{MSELoss}\left( \varphi\left(\neg\neg q\right), \varphi\left(q\right) \right) + \\
        & \text{MSELoss}\left( \varphi\left(\neg q\right), -\varphi\left(q\right) \right),
    \end{aligned}
\end{align}
where $\sigma(\cdot)$ is the sigmoid function, $\text{MSELoss}(x, y) = \|x - y\|_2^2$ is the mean squared error, and $\alpha$, $\beta$ are weighting coefficients for $\mathcal{L}_{\text{NS}}$ and $\mathcal{L}_{\text{NL}}$, respectively.

To mitigate overfitting, we adopt a three-stage curriculum learning strategy~\cite{bengio2009curriculum} with gradient backpropagation, structured as follows: 
\begin{enumerate}[label=(\roman*)]
\item Train entity and relation embeddings ($\{\varphi(e)|e\in\mathcal{V}\}\cup\{\varphi(r)|r\in\mathcal{R}\}$) on atomic queries (\textbf{\textit{1p}}): $\mathcal{L}_1=\mathcal{L}_\text{CE}(\mathcal{B}_\textbf{\textit{1p}})$.
\item Fix pretrained embeddings and fine-tune the differentiable Skolemization module ($\text{MLP}_\text{I}$ and $\text{MLP}_\text{D}$) on multi-hop queries (\textbf{\textit{2p}} and \textbf{\textit{3p}}): $\mathcal{L}_2=\mathcal{L}_\text{CE}(\mathcal{B}_{\textbf{\textit{2p}}\wedge\textbf{\textit{3p}}})$.
\item Optimize the neural negator ($\text{MLP}_\text{N}$) on queries involving negation (\textbf{\textit{2in}}), with pretrained parameters frozen: $\mathcal{L}_3=\mathcal{L}_\text{CE}(\mathcal{B}_{\textbf{\textit{2in}}}) + \mathcal{L}_\text{NS}(\mathcal{B}_{\textbf{\textit{2in}}};\alpha) + \mathcal{L}_\text{NL}(\mathcal{B}_{\textbf{\textit{2in}}};\beta)$.
\end{enumerate}

\section{Theoretical Analysis}
\label{sec:theory-analysis}
\begin{theorem}[Universal Expressivity for EFO\textsubscript{1}]
LVSA guarantees universal applicability to all EFO\textsubscript{1} queries over incomplete KGs.
\end{theorem}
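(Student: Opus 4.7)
The plan is to prove universality by structural induction over the grammar of EFO$_1$ queries given in Equation~\ref{eq:epo1-formula}, exhibiting for each syntactic constructor a concrete LVSA module that realizes it. First I would invoke Property~\ref{property:skolem} to Skolemize the input query $Q[V_?]$ into $Q_S[V_?]$, after which every existentially quantified variable is replaced either by a Skolem constant (independent case) or by a Skolem function (dependent case). Satisfiability-preservation is the appropriate notion of correctness here, because LVSA is a Skolemization-based solver aimed at recovering at least one valid witness rather than enumerating all groundings.

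The induction then walks through the remaining cases of $Q_S$: (i) atomic literals $r(V',V)$ are realized by composing $\varphi$ with the binding operator $g_P$; (ii) conjunctions are realized by bundling via NormAdd as in Equation~\ref{eq:conjunction}, and disjunctions reduce to the DNF set-union decomposition $A[Q]=\bigcup_i A[c_i]$ already stated in Section~\ref{sec:basic-symbolic}; (iii) negated literals are realized by the neural negator of Equation~\ref{eq:negation}; and (iv) Skolem terms produced by the elimination of $\exists$ are realized by Equation~\ref{eq:independ-var} for independent variables and by Equation~\ref{eq:depend-var} for dependent variables, with the latter invoked after the auxiliary transformation $\Rightarrow_*$ described in Section~\ref{sec:multi-head-skolem}. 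Because every intermediate object is a complex-valued vector of the same shape, the output of any constructor can be fed into the next, allowing the induction to propagate along the query's computation graph in topological order.

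The main obstacle is case (iv). A general EFO$_1$ query may have an existential variable whose Skolem function takes an arbitrary number of arguments, whereas $\text{MLP}_\text{D}$ as written in Equation~\ref{eq:depend-var} has a fixed binary input, and $\text{MLP}_\text{I}$ handles only the leaf case. The key observation to make rigorous is that bundling is commutative and associative up to the conjunction axioms, so an $n$-ary dependency can be aggregated into a single bundled embedding before being passed into $\text{MLP}_\text{D}$, or equivalently combined by recursing along a topological ordering of the computation graph. A secondary obstacle is showing that the introduction of auxiliary independent variables inside $\Rightarrow_*$ does not induce cycles; this follows from the acyclic structure of EFO$_1$ computation graphs and the finiteness of the underlying domain $\mathcal{V}\times\mathcal{R}$ on which the MLPs operate.

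Putting the cases together, every EFO$_1$ query in DNF is mapped by LVSA into a well-defined composition of $\varphi$, $g_P$, $g_\wedge$, $g_\vee$, $g_\neg$, and $g_\exists$, after which $\phi_E$ yields a score for every candidate answer $e\in\mathcal{V}$. Since the construction makes no assumption about the completeness of $\mathcal{E}$, it applies equally to incomplete KGs under the open-world assumption, and the conclusion of the theorem follows.
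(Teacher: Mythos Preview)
Your proposal is correct and follows essentially the same constructive strategy as the paper: both represent the query as a DAG, process nodes in topological order, and resolve the multi-arity Skolem-function obstacle by bundling dependencies before feeding them to $\text{MLP}_\text{D}$. The only refinement the paper makes explicit is that forward and backward dependencies are bundled \emph{separately} into $\varphi(F_D)$ and $\varphi(B_D)$ and then concatenated, rather than collapsed into a single superposition as your sketch suggests.
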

\begin{proof}
We establish universal expressivity through constructive proof. Let $Q[V_?]$ be an arbitrary EFO\textsubscript{1} query, which can be represented as a DAG $\mathcal{G}_Q = (\mathcal{V}_Q, \mathcal{E}_Q)$ where nodes $\mathcal{V}_Q$ represent (non-)variables and edges $\mathcal{E}_Q$ encode relational projections.
Following the topological ordering of nodes in $\mathcal{G}_Q$, LVSA computes (non-)variable embeddings sequentially.

\noindent\textbf{Base Case:} For anchor entities and independent existential variables, their embeddings are derived by Eqs. \eqref{eq:complex-emb} and \eqref{eq:independ-var}, respectively.

\noindent\textbf{Inductive Step:} For each dependent existential variable $V_D \in \mathcal{V}_Q$ with one predecessor $u$ and one successor $w$, its embedding is computed by Eq. \eqref{eq:depend-var}.

\noindent\textbf{Canonicalization:} $V_D$ with multiple dependencies is reduced to this canonical form through algebraic manipulation. Consider $V_D$ with forward ($F_D$) and backward ($B_D$) dependencies:
\begin{equation}
F_D = \exists V_D: \bigwedge_{i=1}^N r_{F,i}(u_i, V_D),~
B_D = \exists V_D: \bigwedge_{j=1}^M r_{B,j}(V_D, w_j).
\label{eq:vd-dependency}
\end{equation}

Given $B_D\Rightarrow_* B_D^*= \exists V_D: \bigwedge_{m=1}^M r^{-1}_{B,m}(V_{I,m},V_D)$, LVSA unifies them via superposition:
\begin{equation}
\begin{aligned}
    & \varphi(F_D)=\varphi(r_{F,1}\otimes V_{F,1} + \dots + r_{F,N}\otimes V_{F,N}), \\
    & \varphi(B_D^*)=\varphi(r^{-1}_{B,1}\otimes V_{I,1} + \dots + r^{-1}_{B,M}\otimes V_{I,M}), \\
    & \varphi(V_D)=\text{MLP}_{\text{D}}\left(\varphi(F_D)\oplus\varphi(B_D^*)\right).
\end{aligned}
\end{equation}

By structural induction on $\mathcal{G}_Q$'s topology, all EFO\textsubscript{1} constructs are preserved under LVSA's algebraic operations, which enables LVSA to systematically compute embeddings and execute sound logical inference for arbitrary EFO\textsubscript{1} queries.
\end{proof}

\begin{proposition}[Computational Complexity of LVSA for Existential Quantification]
\label{prop:complexity}
Let $Q[V_?]$ be a $k$-hop query and let $d$ be the embedding dimension. The general complexity for Skolemization-based methods is $\mathcal{O}\left(k\cdot cost(g) + cost(\phi_E)\right)$, where $\text{cost}(g)$ is the cost of Skolem function approximation per existential variable and $\text{cost}(\phi_E)$ denotes the cost of fast querying. LVSA achieves an efficient instantiation of this paradigm with $\mathcal{O}\left(k\cdot d^2 + |\mathcal{V}| \cdot d\right)$. This represents an advantage over both Grounding-based methods, which typically exhibit exponential scaling in $k$, and other Skolemization-based methods, which often have higher constant factors in their implementations.
\end{proposition}
\begin{proof}
We analyze the computational complexity through a systematic comparison of different paradigms. 

\noindent\textbf{Complexity Analysis of LVSA:} The computation in LVSA comprises four main steps: (i) relational projection via complex-valued binding operations, requiring $\mathcal{O}(k\cdot d)$ time, (ii) processing of independent existential variables via $\text{MLP}_\text{I}$, (iii) handling of dependent existential variables via $\text{MLP}_\text{D}$, and (iv) final answer retrieval via similarity function $\phi_E$. Assuming fixed-depth MLPs with hidden size $\mathcal{O}(d)$, the costs of both $\mathrm{MLP_I}$ and $\mathrm{MLP_D}$ are $\mathcal{O}(d^2)$. The overall complexity is therefore expressed as $\mathcal{O}(k \cdot (d + cost(\text{MLP}_\text{I}) + cost(\text{MLP}_\text{D})) + cost(\phi_E))=\mathcal{O}(k \cdot d^2 + |\mathcal{V}| \cdot d)$.

\noindent\textbf{Comparison with Other Skolemization-based Methods:}
The Skolem function approximation overhead (i.e., $cost(g)$) is primarily governed by the complexity of the underlying neural architecture. While LVSA and most geometric embedding-based methods employ lightweight MLPs with $cost(g) = \mathcal{O}(d^2)$, GNN-based methods incur substantially higher costs that scale with the structural complexity of the query, typically reaching $\mathcal{O}(k \cdot d^2)$. Regarding the fast querying cost (i.e., $cost(\phi_E)$), this component depends mainly on the computational characteristics of the embedding space. Both LVSA and GNN-based methods using vector embeddings achieve efficient similarity computation, whereas geometric embedding-based methods often rely on specialized geometric transformations that introduce additional computational overhead. Overall, LVSA's architectural simplicity yields notable computational advantages, while other Skolemization-based methods generally involve larger constant factors due to their more complex geometric or architectural mechanisms.

\noindent\textbf{Comparison with Grounding-based Methods:}
Assuming the neural link prediction cost per atomic literal is $cost(\phi) = \mathcal{O}(|\mathcal{V}| \cdot d)$, Grounding-based methods ground each existential variable by enumerating over the entity set $\mathcal{V}$, leading to a complexity of $\mathcal{O}(|\mathcal{V}|^{k}\cdot |\mathcal{V}| \,\cdot\,d)=\mathcal{O}(|\mathcal{V}|^{k+1}\,\cdot\,d)$. CQD-Beam employs beam search to mitigate the exponential dependence on the entity set size $|\mathcal{V}|$, reducing complexity to $\mathcal{O}(b^k \cdot |\mathcal{V}| \cdot d)$ where $b$ is the beam size. However, its complexity remains exponential in the number of existential variables $k$. This fundamental scaling difference demonstrates the superior scalability of the Skolemization-based CQA paradigm and highlights the particular efficiency of LVSA's design.
\end{proof}

\begin{figure*}[t]
  \centering
  \includegraphics[width=1.\textwidth]{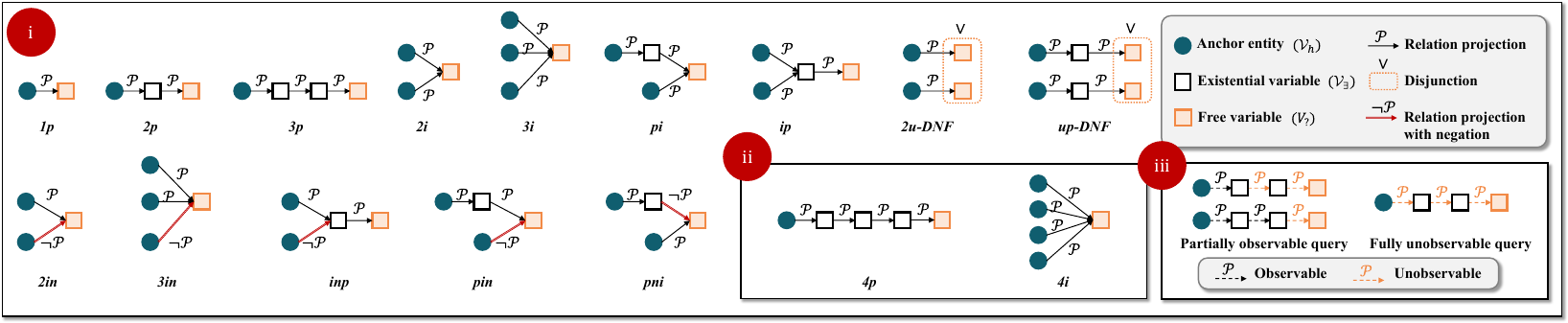}
  \caption{\textbf{(i)} Query graphs of 14 standard query types. Specifically, query types labeled with ``\textit{\textbf{p}}'' (except the atomic query \textit{\textbf{1p}}) contain existential quantifiers. Besides, ``\textit{\textbf{i}}'', ``\textit{\textbf{u}}'', and ``\textit{\textbf{n}}'' indicate the presence of logical conjunction, disjunction, and negation operations, respectively. \textbf{(ii)} Query graphs of novel query types from new benchmarks. \textbf{(iii)} Samples of partially observable queries in the standard benchmarks and fully unobservable queries in the new benchmarks.}
  \label{fig:query-graph}
\end{figure*}

\begin{table*}[t]
  \caption{Statistics of the adopted datasets. - indicates that the corresponding dataset is not provided in the benchmark.}
  \label{tab:statistics}
  \centering
  \resizebox{1.\linewidth}{!}{%
  \begin{tabular}{cccccccccccc||cccc}
    \toprule[1.5pt]
      \multirow{2}{*}[-1.ex]{\textbf{\makecell{Standard\\Datasets  \cite{DBLP:conf/iclr/RenHL20}}}} & \multirow{2}{*}{\textbf{\#Entity}} & \multirow{2}{*}{\textbf{\#Relation}} & & \multicolumn{2}{c}{\textbf{\#Training Query}} & & \multicolumn{2}{c}{\textbf{\#Validation Query}} & & \multicolumn{2}{c||}{\textbf{\#Testing Query}} 
      & \multirow{2}{*}[-1.ex]{\textbf{\makecell{New\\Datasets \cite{greguccicomplex}}}} & \multicolumn{3}{c}{\textbf{\#Testing Query in Total}}
      \\
    \cmidrule{5-6} \cmidrule{8-9} \cmidrule{11-12} \cmidrule{14-16}
      & & & & \textbf{\textit{1p/2p/3p/2in}} & \textbf{Others} & & \textbf{\textit{1p}} & \textbf{Others} & & \textbf{\textit{1p}} & \textbf{Others} 
      & & \textbf{14 Types} & \textbf{\textit{4p}} & \textbf{\textit{4i}}
      \\
    \midrule
    \textbf{FB15k} & 14,951 & 1,345 & & 273,710 & 273,71 & & 59,097 & 8,000 & & 67,016 & 8,000 
    & - & - & - & -
    \\
    \textbf{FB15k-237} & 14,505 & 237 & & 149,689 & 149,68 & & 20,101 & 5,000 & & 22,812 & 5,000 
    & {\textbf{FB15k-237+H}} & {68,687} & {2,670} & {20,189}
    \\
    \textbf{NELL995} & 63,361 & 200 & & 107,982 & 107,98 & & 16,927 & 4,000 & & 17,034 & 4,000 
    & \textbf{NELL995+H} & 71,464 & 2,322 & 22,326
    \\
    \bottomrule[1.5pt]
  \end{tabular}
  }
\end{table*}

\section{Experiments}
\label{sec:exp}
\subsection{Experimental Settings}
\label{sec:exp-settings}
\noindent\textbf{Datasets:}
We adopt the standard CQA benchmarks established by Ren et al.~\cite{DBLP:conf/nips/RenL20} and the new benchmarks from Gregucci et al.~\cite{greguccicomplex}, derived from three widely used KGs: FB15k~\cite{DBLP:conf/sigmod/BollackerEPST08}, FB15k-237~\cite{DBLP:conf/acl-cvsc/ToutanovaC15}, and NELL995~\cite{DBLP:conf/emnlp/XiongHW17}, using their official training/validation/testing splits. Detailed statistics are provided in Table \ref{tab:statistics}. Notably, standard benchmarks primarily evaluate performance on partially observable complex queries that can often be guided by 1-hop lookup, while new benchmarks exclusively assess fully unobservable queries requiring compositional reasoning over learned representations. Each benchmark includes 14 basic query types, consisting of 8 types (\textbf{\textit{2p/3p/ip/pi/up/inp/pin/pni}}) with existential quantifiers and 5 types (\textbf{\textit{2in/3in/inp/pin/pni}}) involving logical negation. The new benchmarks further introduce two query types (\textbf{\textit{4p/4i}}). Comprehensive query descriptions are shown in Fig. \ref{fig:query-graph}. 

\begin{table}[t]
    \caption{Grid search configuration and results on FB15k-237's validation set, using average MRR (\%) of \textbf{\textit{2p}}, \textbf{\textit{3p}}, and \textbf{\textit{2i}} queries as selection metric.}
\label{tab:grid-search}
    \centering
    \resizebox{1\linewidth}{!}{%
	\begin{tabular}{c||ccc||ccc||ccc}
    \toprule[1.5pt]
    \cellcolor{mygray}\textbf{Parameter} &
    \multicolumn{3}{c||}{\cellcolor{mygray}\textbf{$\alpha$}} &
    \multicolumn{3}{c||}{\cellcolor{mygray}\textbf{$\beta$}} &
    \multicolumn{3}{c}{\cellcolor{mygray}\textbf{lr}}
    \\
    Search Range &
    0.1 & 1.0 & 10 &
    0.1 & 1.0 & 10 &
    1e-4 & 5e-4 & 1e-3 
    \\
    \midrule
    MRR (\%) &
    \textbf{10.5} & \textbf{10.5} & 10.4 &
    10.1 & \textbf{10.5} & 10.4 &
    10.3 & \textbf{10.5} & 10.4 
    \\
    \bottomrule[1.5pt]
	\end{tabular}
    }
\end{table}

\noindent\textbf{Evaluation Protocol:} Following the evaluation protocol of Ren et al.~\cite{DBLP:conf/nips/RenL20}, we compute the filtered Mean Reciprocal Rank (MRR) and Hits at $K$ (H@$K$) by ranking each test query's non-trivial answers against negative candidates, excluding known true triples from the training/validation sets. Formally, for a complex query $Q[V_?]$ with multiple answers $A[Q]\subseteq \mathcal{V}$, the filtered MRR and H@$K$ are defined by
\begin{align}
   \text{MRR} &= \frac{1}{|A[Q]|} \sum_{a\in A[Q]} \frac{1}{\text{Rank}(a \mid \mathcal{V}\setminus A[Q])}, \\
   \text{H@}K &= \frac{1}{|A[Q]|} \sum_{a\in A[Q]} \textbf{1}\left[ \text{Rank}(a \mid \mathcal{V}\setminus A[Q]) \le K \right],
\end{align}
where $\text{Rank}(a \mid \mathcal{V}\setminus A[Q])$ is the rank of answer $a$ against all non-answer entities $\mathcal{V}\setminus A[Q]$ and $\textbf{1}[\cdot]$ is the indicator function.

\noindent\textbf{Baselines:} We compare LVSA against a comprehensive set of existing CQA approaches. Skolemization-based baselines include CQD-CO~\cite{DBLP:conf/iclr/ArakelyanDMC21}, BetaE~\cite{DBLP:conf/nips/RenL20}, ConE~\cite{DBLP:conf/nips/ZhangWCJW21}, LMPNN~\cite{DBLP:conf/iclr/WangSWS23}, and CLMPT~\cite{zhang2024conditional}. Grounding-based baselines include CQD-Beam~\cite{DBLP:conf/iclr/ArakelyanDMC21}, GNN-QE~\cite{DBLP:conf/icml/Zhu0Z022}, ULTRAQ~\cite{DBLP:conf/nips/GalkinZ00Z24}, and QTO~\cite{DBLP:conf/icml/BaiLLH23}. Following their original papers, all baselines that rely on pretrained knowledge graph embeddings use ComplEx~\cite{trouillon2016complex}, except for the geometric embedding-based methods BetaE and ConE. More discussions of the core algorithms and specific model designs for each baseline can be found in Section \ref{sec:dichotomy}.

\begin{table*}[t]
  \caption{The MRR (\%) results of our LVSA and the existing Skolemization-based baselines on the standard benchmarks. $A_p$, $A_n$, and $A$ are the average scores among EPFO queries, queries involving negation, and all queries, respectively.}
  \label{tab:main-exp-betae}
  \centering
  \resizebox{1.\linewidth}{!}{%
  \begin{tabular}{cccccccccccccccccccc}
    \toprule[1.5pt]
    \textbf{Datasets} & \textbf{Model} & & $A_p$ & $A_n$ & $A$ & \textbf{\textit{1p}} & \textbf{\textit{2p}} & \textbf{\textit{3p}} & \textbf{\textit{2i}} & \textbf{\textit{3i}} & \textbf{\textit{pi}} & \textbf{\textit{ip}} & \textbf{\textit{2u}} & \textbf{\textit{up}} & \textbf{\textit{2in}} & \textbf{\textit{3in}} & \textbf{\textit{inp}} & \textbf{\textit{pin}} & \textbf{\textit{pni}} \\
    \midrule[0.75pt]
    \midrule[0.75pt]
    \multirow{6}{*}{FB15k} & CQD-CO & & 45.3 & 4.6 & 30.8
    & \cellcolor{mygreen}\textbf{89.4} &  27.6 &  15.1 & 63.0 & 65.5 & 46.0 & 35.2 & 42.9 & 23.2
    & 0.2 & 0.2 & 4.0 & 0.1 & 18.4 \\
    & BetaE & & 41.6 & 11.8 & 31.0
    & 65.1 & 25.7 & 24.7 & 55.8 & 66.5 & 43.9 & 28.1 & 40.1 & 25.2
    & 14.3 & 14.7 & 11.5 & 6.5 & 12.4 \\
    & ConE & & 49.8 & 14.8 & 37.3
    & 73.3 &	33.8 &	29.2 &	64.4 &	73.7 &	50.9 &	35.7 &	55.7 &	31.4
    & 17.9 &	18.7 &	12.5 &	9.8 &	15.1 \\
    & LMPNN & & 50.6 & 20.0 & 39.7
    & 85.0 & 39.3 & 28.6 & 68.2 & 76.5 & 46.7 & 43.0 & 36.7 & 31.4
    & 29.1 & 29.4 & 14.9 & 10.2 & 16.4 \\
    & CLMPT & & 55.1 & 17.9 & 41.8
    & 86.1 & \cellcolor{mygreen}\textbf{43.3} & \cellcolor{mygreen}\textbf{33.9} & 69.1 & 78.2 & 55.1 & \cellcolor{mygreen}\textbf{46.6} & 46.1 & \textbf{37.3}
    & 21.2 & 22.9 & \cellcolor{mygreen}\textbf{17.0} & 13.0 & 15.3 \\
    \cmidrule{2-20}
     & \cellcolor{myorange}\textbf{LVSA} & & \cellcolor{mygreen}\textbf{59.9} & \cellcolor{mygreen}\textbf{27.7} & \cellcolor{mygreen}\textbf{48.4}
    & \cellcolor{mygreen}\textbf{89.4} &	40.6 &	32.0 &	\cellcolor{mygreen}\textbf{80.4} &	\cellcolor{mygreen}\textbf{84.2} &	\cellcolor{mygreen}\textbf{61.1} &	43.2 &	\cellcolor{mygreen}\textbf{72.0} &	35.8
    & \cellcolor{mygreen}\textbf{41.5} &	\cellcolor{mygreen}\textbf{33.7} &	15.0 &	\cellcolor{mygreen}\textbf{17.9} &	\cellcolor{mygreen}\textbf{30.4} \\
    \midrule[0.75pt]
    \multirow{6}{*}{FB15k-237} & CQD-CO & & 19.8 & 1.7 & 13.3
    & 46.7 & 10.3 & 6.5 & 23.1 & 29.8 & 22.1 & 16.3 & 14.2 & 8.9
    & 0.2 & 0.2 & 2.1 & 0.1 & 6.1 \\
    & BetaE & & 20.9 & 5.5 & 15.4
    & 39.0 & 10.9 & 10.0 & 28.8 & 42.5 & 22.4 & 12.6 & 12.4 & 9.7
    & 5.1 & 7.9 & 7.4 & 3.5 & 3.4 \\
    & ConE & & 23.4 & 5.9 & 17.1
    & 41.8 &	12.8 &	11.0 &	32.6 &	47.3 &	25.5 &	14.0 &	14.5 &	10.8
    & 5.4 &	8.6 &	7.8 &	4.0 &	3.6 \\
    & LMPNN & & 24.1 & 7.8 & 18.3
    & 45.9 & 13.1 & 10.3 & 34.8 & 48.9 & 22.7 & 17.6 & 13.5 & 10.3
    & 8.7 & 12.9 & 7.7 & 4.6 & 5.2 \\
    & CLMPT & & 25.9 & 7.9 & 19.4
    & 45.7 & 13.7 & 11.3 & 37.4 & 52.0 & 28.2 & \textbf{19.0} & 14.3 & 11.1
    & 7.7 & 13.7 & 8.0 & 5.0 & 5.1 \\
    \cmidrule{2-20}
    \rowcolor{mygreen}\cellcolor{white} & \cellcolor{myorange}\textbf{LVSA} & \cellcolor{white} & \textbf{28.8} & \textbf{10.9} & \textbf{22.4}
    & \textbf{49.2} &	\textbf{15.8} &	\textbf{12.1} &	\textbf{41.7} &	\textbf{55.1} &	\textbf{31.0} &	\textbf{19.0} &	\textbf{22.5} &	\textbf{12.4}
    & \textbf{12.4} &	\textbf{18.3} &	\textbf{8.5} &	\textbf{7.3} &	\textbf{8.0} \\
    \midrule[0.75pt]
    \multirow{6}{*}{NELL995} & CQD-CO & & 28.4 & 1.9 & 18.9
    & 60.8 & 18.3 & 13.2 & 36.5 & 43.0 & 30.0 & 22.5 & 17.6 & 13.7
    & 0.1 & 0.1 & 4.0 & 0.0 & 5.2 \\
    & BetaE & & 24.6 & 5.9 & 17.9
    & 53.0 & 13.0 & 11.4 & 37.6 & 47.5 & 24.1 & 14.3 & 12.2 & 8.5
    & 5.1 & 7.8 & 10.0 & 3.1 & 3.5 \\
    & ConE & & 27.1 & 6.4 & 19.7
    & 53.1 &	16.1 &	13.9 &	40.0 &	50.8 &	26.3 &	17.5 &	15.3 &	11.3
    & 5.7 &	8.1 &	10.8 &	3.5 &	3.9 \\
    & LMPNN & & 30.7 & 8.0 & 22.6
    & 60.6 & 22.1 & 17.5 & 40.1 & 50.3 & 28.4 & 24.9 & 17.2 & 15.7
    & 8.5 & 10.8 & \cellcolor{mygreen}\textbf{12.2} & 3.9 & 4.8 \\
    & CLMPT & & 31.3 & 7.0 & 22.6
    & 58.9 & 22.1 & 18.4 & 41.8 & 51.9 & 28.8 & 24.4 & 18.6 & 16.2
    & 6.6 & 8.1 & 11.8 & 3.8 & 4.5 \\
    \cmidrule{2-20}
    \rowcolor{mygreen}\cellcolor{white} & \cellcolor{myorange}\textbf{LVSA} & \cellcolor{white} & \textbf{32.4} & \textbf{9.1} & \textbf{24.1}
    & \textbf{61.0} &	\textbf{22.2} &	\textbf{18.9} &	\textbf{43.6} &	\textbf{53.2} &	\textbf{31.8} &	\textbf{25.0} &	\textbf{19.4} &	\textbf{16.6}
    & \textbf{10.1} &	\textbf{11.4} &	\cellcolor{white}11.8 &	\textbf{6.1} &	\textbf{6.1} \\
    \bottomrule[1.5pt]
  \end{tabular}
  }
\end{table*}

\noindent\textbf{Implementation:} All experiments are conducted on a single Nvidia RTX 3090 GPU with 24GB of memory. LVSA uses LeakyReLU activations for neural modules. Based on grid search results (Table \ref{tab:grid-search}), we set regularization weights to $\alpha=\beta=1$. For each training stage, we employ the Adam optimizer with an initial learning rate of 5e-4. Following Chen et al.~\cite{DBLP:conf/akbc/ChenM0S21}, we incorporate both entity prediction and relation prediction objectives into the primary loss function.

As detailed in Section \ref{sec:basic-symbolic}, LVSA aligns with ComplEx in the designs of the embedding space and the triple scoring metric. This allows initializing the entity and relation embeddings in LVSA using pre-trained ComplEx embeddings ($d=1000$), with subsequent optimization limited to the differentiable Skolemization module and the neural negator. To mitigate the risk of local optima inherent to Skolemization's satisfiability relaxation (Property \ref{property:skolem}a), we introduce a cross-entropy term over the in-batch candidate sets $\mathcal{V}_\mathcal{B}=\{a \in A[Q]\}_{(Q,a)\in\mathcal{B}}$ to stabilize the training of the differentiable Skolemization module, which is defined as follows:
\begin{equation}
    \mathcal{L}_\text{CE}(\mathcal{B}_{2p\wedge 3p})=\frac{1}{|\mathcal{B}|}\sum_{Q\in\mathcal{B}} H(Q,\mathcal{V}) + H(Q,\mathcal{V}_\mathcal{B}).
\end{equation}

\begin{table}[t]
    \caption{The H@1 (\%) results of our LVSA and Skolemization-based baselines on the standard benchmarks.}
\label{tab:tab:main-exp-betae-h1}
    \centering
    \resizebox{1\linewidth}{!}{%
	\begin{tabular}{cccccccccccccc}
    \toprule[1.5pt]
    \multirow{2}{*}{\textbf{Model}} & &
    \multicolumn{3}{c}{\textbf{FB15k}} & &
    \multicolumn{3}{c}{\textbf{FB15k-237}} & &
    \multicolumn{3}{c}{\textbf{NELL995}}
    \\
    \cmidrule[0.75pt]{3-5} \cmidrule[0.75pt]{7-9} \cmidrule[0.75pt]{11-13} 
     & &
    $A_p$ & $A_n$ & $A$ & &
    $A_p$ & $A_n$ & $A$ & &
    $A_p$ & $A_n$ & $A$
    \\
    \midrule[0.75pt]
    \midrule[0.75pt]
    CQD-CO & & 39.7 & 1.9 & 26.2 & &
    14.7 & 0.5 & 9.7 & &
    21.3 & 0.7 & 14.0
    \\
    BetaE & & 31.3 & 5.2 & 21.9 & &
    13.4 & 2.8 & 9.6 & &
    17.8 & 2.1 & 12.2
    \\
    ConE & & 39.6 & 7.3 & 28.0 & &
    15.6 & 2.2 & 10.9 & &
    19.8 & 2.2 & 13.5
    \\
    LMPNN & & 41.4 & 12.2 & 31.0 & &
    16.3 & 3.5 & 11.7 & &
    23.0 & 3.4 & 16.0
    \\
    CLMPT & & 45.9 & 9.3 & 32.9 & &
    17.4 & 2.9 & 12.2 & &
    22.7 & 2.4 & 15.4
    \\
    \midrule[0.75pt]
    \rowcolor{mygreen}\cellcolor{myorange}\textbf{LVSA} & \cellcolor{white} & \textbf{53.2} & \textbf{18.9} & \textbf{41.0} & &
    \textbf{20.8} & \textbf{5.5} & \textbf{15.4} & &
    \textbf{24.3} & \textbf{3.8} & \textbf{17.0}
    \\
    \bottomrule[1.5pt]
	\end{tabular}
    }
\end{table}

\subsection{Main Results}
\label{sec:main-exp}
Our main experiments evaluate the proposed LVSA against Skolemization-based baselines on standard benchmarks. The results in Tables \ref{tab:main-exp-betae} and \ref{tab:tab:main-exp-betae-h1} reveal distinct limitations in existing approaches that LVSA is designed to address.

CQD-CO~\cite{DBLP:conf/iclr/ArakelyanDMC21} exhibits limited performance across all query types, particularly on multi-hop queries like \textbf{\textit{3p}}, due to its oversimplified Skolem function approximation. BetaE~\cite{DBLP:conf/nips/RenL20} and ConE~\cite{DBLP:conf/nips/ZhangWCJW21} struggle with queries involving logical negations, revealing their fundamental difficulty in harmonizing geometric properties with logical constraints. Although GNN-based methods like LMPNN~\cite{DBLP:conf/iclr/WangSWS23} and CLMPT~\cite{zhang2024conditional} achieve competitive results on certain queries, their performance varies substantially across datasets. For instance, CLMPT's strong performance on FB15k's existential queries like \textbf{\textit{2p}} and \textbf{\textit{3p}}, contrasted with its weaker results on the cleaned FB15k-237, suggests a reliance on memorizing dataset-specific patterns rather than robust reasoning. More critically, their black-box nature prevents tracing reasoning errors, whereas LVSA's transparent computation graph enables full traceability, as validated in Section \ref{sec:exp-skolem} later.

LVSA demonstrates statistically significant superiority over all baselines (paired t-test, \textit{p} $<$ 0.01). Specifically, the significance test was performed by treating the MRR scores of LVSA and each baseline on 14 query types as paired data arrays. LVSA achieves particularly strong performance on conjunction-heavy queries like \textbf{\textit{2i}} (MRR: 41.7\% vs. CLMPT's 37.4\% on FB15k-237) and queries involving logical negation like \textbf{\textit{2in}} (MRR: 12.4\% vs. LMPNN's 8.7\% on FB15k-237). These results empirically validate the effectiveness of LVSA's binding and bundling operations for logical conjunction and the logically constrained neural negator. Although LVSA slightly underperforms CLMPT on FB15k's \textbf{\textit{2p}} and \textbf{\textit{3p}} queries, it achieves robust reasoning performance across all three standard datasets. This performance difference may stem from the known inverse relation leakage in FB15k~\cite{DBLP:conf/acl-cvsc/ToutanovaC15}. While CLMPT benefits from such dataset-specific artifacts, LVSA maintains principled reasoning patterns that promise better generalization to real-world KGs unaffected by similar biases. 

However, LVSA exhibits diminished performance advantages on \textbf{\textit{ip}} and \textbf{\textit{inp}} queries when superposition is used as input to the differentiable Skolemization module. Given its strong performance on \textbf{\textit{2i}} and \textbf{\textit{3i}} queries, which demonstrate the effectiveness of the bundling mechanism, we attribute this performance drop to the module's current limitations in handling specific input representations. Therefore, enhancing the robustness of the differentiable Skolemization module represents an important direction for future work.

\begin{figure*}[t]
\begin{minipage}[t]{0.70\linewidth}
\vspace{0pt}
\captionof{table}{The MRR (\%) results on new benchmarks, with relative decline ($\triangle$) from standard benchmarks. $A_{p,H}$, $A_{n,H}$, and $A_H$ represent the average scores across EPFO, negation, and all 14 standard query types on the new benchmarks, respectively.} 
\centering
\resizebox{0.98\linewidth}{!}{%
  \begin{tabular}{cccccccccccccc}
    \toprule[1.5pt]
    \multirow{2}{*}{\textbf{Paradigm}} & \multirow{2}{*}{\textbf{Model}} &  &
    \multicolumn{5}{c}{\textbf{FB15k-237+H}} 
    & &
    \multicolumn{5}{c}{\textbf{NELL995+H}}
    \\
    \cmidrule[0.75pt]{4-8} \cmidrule[0.75pt]{10-14}
    & & &
    $A_{p,H}$ & $A_{n,H}$ & $A_H$ & $A$ & \cellcolor{mygray}$\triangle (\downarrow)$
    & & 
    $A_{p,H}$ & $A_{n,H}$ & $A_H$ & $A$ & \cellcolor{mygray}$\triangle (\downarrow)$
    \\
    \midrule[0.75pt]
    \midrule[0.75pt]
    \multirow{4}{*}{\makecell{Grounding \\ -based}} & GNN-QE & 
    & 12.6	& 3.1
    & 9.2 & 20.9 & 55.9 
    & &
    16.9	& 2.7
    & 11.9 & 22.1 & 46.3
    \\
    & ULTRAQ & 
    & 11.4	& 2.4
    & 8.2 & 18.4 & \cellcolor{mygreen}\textbf{55.5}
    & &
    11.8 & 2.3
    & 8.4 & 17.1 & 50.9
    \\
    \rowcolor{mygreen}\cellcolor{white}-based & \cellcolor{white} QTO & \cellcolor{white}
    & \textbf{12.7}	& \textbf{4.2}
    & \textbf{9.7} & \textbf{27.1} & \cellcolor{white}64.3 
    & &
    \textbf{20.2} & \textbf{5.0}
    & \textbf{14.8} & \textbf{25.7} & \textbf{42.6}
    \\
    \cmidrule{2-14}
    \rowcolor{mygray}\cellcolor{white} & \textit{Avg.} & \cellcolor{white}
    & \textit{12.2}	& \textit{3.2}
    & \textit{9.0} & \textit{22.1} & \textit{58.6} 
    & &
    \textit{16.3} & \textit{3.3}
    & \textit{11.7} & \textit{21.6} & \textit{46.6}
    \\
    \midrule[0.75pt]
    \multirow{4}{*}{\makecell{Skolemization\\-based}} & ConE & 
    & 11.6	& 2.8
    & 8.5 & 17.1 & 50.5 
    & &
    19.6	& 3.2
    & 13.8 & 19.7 & 30.2
    \\
    & CLMPT &
    & 15.6	& 3.5
    & 11.3 & 19.4 & \cellcolor{mygreen}\textbf{41.9} 
    & &
    24.1	& 3.4
    & 16.7 & 22.6 & 25.9
    \\
    \rowcolor{mygreen}\cellcolor{white}-based  & \cellcolor{myorange}\textbf{LVSA} & \cellcolor{white}
    & \textbf{15.7}	& \textbf{5.8}
    & \textbf{12.2} & \textbf{22.4} & \cellcolor{white}45.6 
    & &
    \textbf{25.5}	& \textbf{5.0}
    & \textbf{18.2} & \textbf{24.1} & \textbf{24.6}
    \\
    \cmidrule{2-14}
    \rowcolor{mygray}\cellcolor{white} & \textit{Avg.} & \cellcolor{white}
    & \textit{14.3}	& \textit{4.0}
    & \textit{10.7} & \textit{19.6} & \textit{46.0} 
    & &
    \textit{23.1} & \textit{3.9}
    & \textit{16.2} & \textit{22.1} & \textit{26.9}
    \\
    \bottomrule[1.5pt]
    \multicolumn{13}{l}{\small * The performance decline $\triangle\%$ is defined as $(A - A_H) / A \times 100\%$.}\\
  \end{tabular}
  }
\label{tab:std-new-delta}
\end{minipage}\hspace{10pt}
\begin{minipage}[t]{0.27\linewidth}
\vspace{0pt}
        \centering
		\includegraphics[width=0.96\columnwidth]{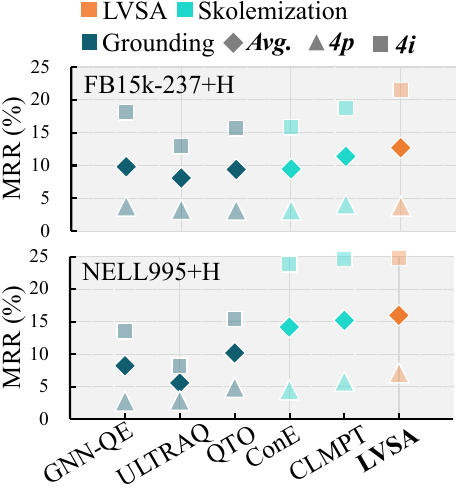}
		\caption{The MRR (\%) results of \textit{\textbf{4p}} and \textit{\textbf{4i}} queries.}
  \label{fig:4p-4i}
\end{minipage}
\vspace{-10pt}
\end{figure*}

\subsection{Results on New Benchmarks}
\label{sec:new-benchmark-results}
The new benchmarks extend the standard evaluation by targeting fully unobservable queries (Fig. \ref{fig:query-graph}(iii)) and introducing more complex query types (Fig. \ref{fig:query-graph}(ii)). We evaluate our LVSA against representative Grounding-based and Skolemization-based methods on these benchmarks to gain deeper insights into model generalization under challenging settings.

\noindent\textbf{Evaluation on Fully Unobservable Queries:} 
Table~\ref{tab:std-new-delta} reveals a notable paradigm shift in performance on fully unobservable queries. While Skolemization-based methods achieve marginally lower reasoning accuracy than Grounding-based methods under partial observability ($A$: 19.6\% vs. 22.1\% on FB15k-237), they attain higher MRR on fully unobservable queries ($A_H$: 10.7\% vs. 9.0\% on FB15k-237+H) and exhibit a significantly smaller performance drop when transitioning to new benchmarks ($\triangle$: 46.0\% vs. 58.6\% from FB15k-237 to FB15k-237+H). Among all evaluated models, LVSA not only achieves the highest MRR on unobservable queries but also maintains a relatively small performance decline, confirming its effectiveness in this challenging scenario.

We propose that the performance disparity stems from the fundamental operational differences between the two paradigms. Grounding-based methods ensure completeness through exhaustive search on the entity set, enabling strong performance on partially observable queries by fully leveraging memorized facts. However, Skolemization-based methods seek to find at least one valid reasoning path via approximated Skolem functions. This objective inherently necessitates selective access to relevant facts and compositional reasoning, reducing reliance on exhaustive memorization and enhancing robustness in real-world settings with sparse prior knowledge. The superior performance of LVSA further demonstrates that its simple architecture and transparent reasoning process mitigate overfitting to observable shortcuts and improve generalization under uncertainty.

\noindent\textbf{Evaluation on More Complex Query Structures:} Consistent with results on the 14 standard query types, Skolemization-based methods outperform Grounding-based approaches on the more complex \textbf{\textit{4i}} and \textbf{\textit{4p}} queries from the new benchmarks, with LVSA maintaining a leading position (Fig. \ref{fig:4p-4i}). These results further validate the effectiveness of LVSA's differentiable Skolemization module and bundling mechanism. 

\subsection{Efficiency Analysis}
\label{sec:efficiency-exp}
To complement the theoretical analysis of computational complexity (Proposition \ref{prop:complexity}), this section empirically evaluates the efficiency of LVSA against representative baselines, including the Skolemization-based methods CLMPT and ConE, and the tractable Grounding-based method CQD-Beam~\cite{DBLP:conf/iclr/ArakelyanDMC21}. Fig.~\ref{fig:efficiency} compares the average MRR and Queries Per Second (QPS) across these models on existential positive queries from FB15k-237. The results reveal a clear performance-efficiency tradeoff between the geometric embedding-based ConE and the GNN-based CLMPT. As a contrast, LVSA successfully breaks this tradeoff by outperforming the strongest baseline CLMPT, while also delivering 2.6× and 1.7× higher throughput than CLMPT and ConE, respectively. This efficiency is attributable to LVSA's lightweight VSA-based framework, which avoids the high computational demands of CLMPT's GNN+Transformer architecture and ConE's intricate geometric operations. In the Grounding-based paradigm, although reducing the beam size can improve the efficiency of CQD-Beam, it does so at the cost of logical completeness and a significant degradation in reasoning performance. When the beam size $b \geq 128$, CQD-Beam's average throughput falls below that of all evaluated Skolemization-based methods. Notably, while LVSA matches the MRR of CQD-Beam with $64 \le b \le 256$, it provides a 2–15× improvement in QPS.

We further investigate the impact of the number of existential variables ($k$) on reasoning efficiency. As shown in Fig.~\ref{fig:efficiency-p}, when moving from \textbf{\textit{2p}} to \textbf{\textit{3p}} queries, LVSA and ConE exhibit only a marginal reduction in QPS, which is significantly lower than the 55.4\% reduction seen in CLMPT. In contrast, CQD-Beam faces more severe efficiency degradation as $k$ increases. Linear regression trendlines fitted to CQD-Beam's QPS-beam size relationship indicate that its QPS declines nearly ten times faster for \textbf{\textit{3p}} queries than for \textbf{\textit{2p}} queries as the beam size grows. Moreover, CQD-Beam encounters Out-Of-Memory (OOM) errors when the beam size $b \geq 512$ and the number of existential variables $k > 2$, underscoring its scalability limitations. These results collectively affirm that the Skolemization-based paradigm inherently enhances reasoning efficiency in CQA, with LVSA serving as an effective solution that fully realizes the theoretical benefits of this approach.

\begin{figure}[t]
\centering
\includegraphics[width=1\linewidth]{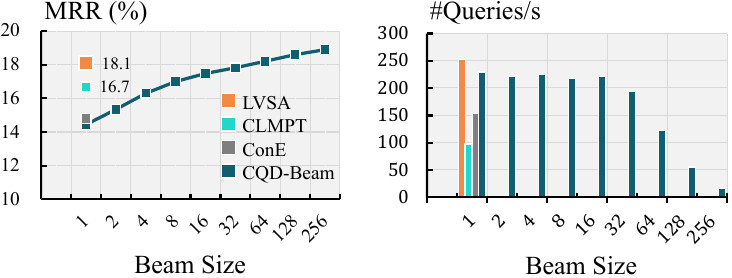}
\caption{The performance (left) and throughput (right) of LVSA, CLMPT, ConE, and CQD-Beam on existential positive queries (\textbf{\textit{2p/3p/ip/pi/up}}) in FB15k-237.}
\label{fig:efficiency}
\end{figure}

\begin{figure}[t]
\centering
\includegraphics[width=1\linewidth]{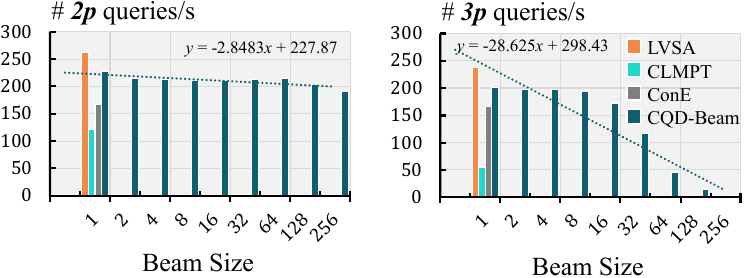}
\caption{The throughput of LVSA and baseline models on \textbf{\textit{2p}} (left) and \textbf{\textit{3p}} (right) queries in FB15k-237. The dashed line shows CQD-Beam's performance decline as beam size grows.}
\label{fig:efficiency-p}
\end{figure}

\subsection{Ablation Analysis}
The architecture of LVSA integrates several core components that operate collaboratively to answer complex queries, including the VSA-based compositional framework, the differentiable Skolemization module, and the neural negator. Within this foundational design, the logical constraints applied to the neural negator constitute the only additional supervisory mechanism. Our ablation study, therefore, focuses on quantifying the contribution of these logic-driven regularizations.

As shown in Table~\ref{tab:ablation-exp}, removing the satisfiability loss $\mathcal{L}_\text{NS}$ leads to a slight performance drop, indicating its role in preventing the model from generating semantically invalid negations. In contrast, ablating the logical axioms loss $\mathcal{L}_\text{NL}$, which enforces the laws of double negation and contradiction, causes the most substantial performance degradation. This is evidenced by MRR reductions of 5.0\%, 3.3\%, and 2.1\% across the three standard benchmarks. Consistent trends are observed on the two new benchmarks. These results confirm that explicitly modeling logical axioms through $\mathcal{L}_\text{NL}$ is essential for handling negation.
This analysis underscores LVSA's distinctive advantage in harmonizing neural and symbolic reasoning. While geometric embedding-based methods such as BetaE and ConE rely on geometric properties to implicitly encourage logical consistency, and GNN-based approaches like LMPNN and CLMPT lack a transparent mechanism for injecting logical rules, LVSA's constraints directly and differentiably optimize toward logical correctness. This principled integration of explicit logical guidance serves as the key to LVSA's superior and more reliable performance.

\begin{table}[t]
    \caption{The MRR (\%) results of our LVSA and ablation models across three standard datasets.}
\label{tab:ablation-exp}
    \centering
    \resizebox{1\linewidth}{!}{%
	\begin{tabular}{clcccccc}
    \toprule[1.5pt]
    \textbf{Datasets} & \textbf{Model} & $A_n$ & \textbf{\textit{2in}} & \textbf{\textit{3in}} & \textbf{\textit{inp}} & \textbf{\textit{pin}} & \textbf{\textit{pni}} \\
	\midrule[0.75pt]
    \midrule[0.75pt]
    \rowcolor{mygray}\cellcolor{white} \multirow{3}{*}{FB15k} &
    LVSA & 27.7 & 41.5 &	33.7 &	15.0 &	17.9 &	30.4 \\
    & w/o $\mathcal{L}_\text{NS}$ & 25.6 & 39.0 & 29.9 &	14.0 &	16.5 &	28.7 \\
    & w/o $\mathcal{L}_\text{NL}$ & 22.7 & 24.6 &	30.1 &	12.5 &	13.1 &	19.0 \\
    \midrule[0.75pt]
    \rowcolor{mygray}\cellcolor{white} \multirow{3}{*}{\makecell{\cellcolor{white}FB15k \\ \cellcolor{white}-237}} &
    LVSA & 10.9 & 12.4 &	18.3 &	8.5 &	7.3 &	8.0 \\
    & w/o $\mathcal{L}_\text{NS}$ & 10.4 & 12.2 &	16.5 &	8.1 &	6.9 &	8.4 \\
    & w/o $\mathcal{L}_\text{NL}$ & 7.6 & 7.4 &	13.9 &	7.0 &	5.1 &	4.5 \\
    \midrule[0.75pt]
    \rowcolor{mygray}\cellcolor{white} \multirow{3}{*}{NELL995} &
    LVSA & 9.1 & 10.1 &	11.4 &	11.8 &	6.1 &	6.1 \\
    & w/o $\mathcal{L}_\text{NS}$ & 9.0 & 9.9 &	11.1 &	11.8 &	6.1 &	6.2 \\
    & w/o $\mathcal{L}_\text{NL}$ & 7.0 & 6.6 &	10.8 &	9.4 &	3.9 &	4.4 \\
    \bottomrule[1.5pt]
	\end{tabular}
    }
\end{table}

\begin{figure*}[t]
\begin{minipage}[t]{0.23\linewidth}
\vspace{0pt}
        \centering
		\includegraphics[width=0.94\columnwidth]{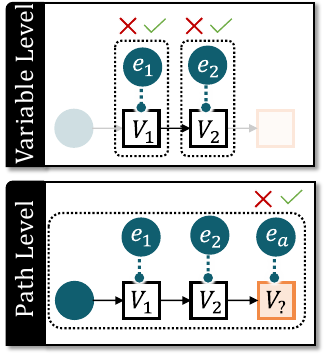}
		\caption{Evaluation at variable and path levels.}
		\label{fig:interpretability}
\end{minipage}\hspace{7pt}
\begin{minipage}[t]{0.75\linewidth}
\vspace{0pt}
\captionof{table}{The results (\%) of quantitative interpretability evaluation on multi-hop queries.}
\centering
\resizebox{0.98\linewidth}{!}{%
  \begin{tabular}{cccccccccccccccccccc}
    \toprule[1.5pt]
      &  & \multicolumn{7}{c}{\cellcolor{mygray}\textbf{Variable-level MRR}} & &  \multicolumn{6}{c}{\cellcolor{mygray}\textbf{Variable-level P@1}} & & \multicolumn{3}{c}{\cellcolor{mygray}\textbf{Path-level P@1}} \\
     \multirow{2}{*}{\textbf{Datasets}} & \multirow{2}{*}{\textbf{Model}} & &  \textbf{\textit{2p}} & & \multicolumn{2}{c}{\textbf{\textit{3p}}} & & \multirow{2}{*}{\textbf{\textit{Avg.}}} & & \textbf{\textit{2p}} & & \multicolumn{2}{c}{\textbf{\textit{3p}}} & & \multirow{2}{*}{\textbf{\textit{Avg.}}} & & \multirow{2}{*}{\textbf{\textit{2p}}} & \multirow{2}{*}{\textbf{\textit{3p}} }& \multirow{2}{*}{\textbf{\textit{Avg.}}} \\
     \cmidrule[0.5pt]{4-4} \cmidrule[0.5pt]{6-7} \cmidrule[0.5pt]{11-11} \cmidrule[0.5pt]{13-14}
     & & & $V_1$ & & $V_1$ & $V_2$ &  & & & $V_1$ & & $V_1$ & $V_2$ &  & &   \\
    \midrule
    \midrule
    \rowcolor{mygreen}\cellcolor{white}\multirow{3}{*}{FB15k} & \cellcolor{myorange}LVSA & \cellcolor{white} & \textbf{57.8} &
    & \cellcolor{white}60.2 & \textbf{40.1} 
    & & \textbf{52.7}
    & & \textbf{50.9} &
    & \cellcolor{white}59.4 & \textbf{33.2} 
    & & \textbf{47.8}
    & & \textbf{69.9}
    & \textbf{40.7} 
    & \textbf{55.3} \\
     & CLMPT & & 53.3 &
     & \cellcolor{mygreen}\textbf{63.5} & 35.3 
     & & 50.7
     & & 45.9 &
     & \cellcolor{mygreen}\textbf{61.2} & 28.1 
     & & 45.1
    & & 62.4
     & 38.0 
     & 50.2 \\
     & ConE & & 44.2 &
     & 47.3 & 24.2
     & & 35.8
     & & 35.6
     & & 43.0 & 17.4 &
     & 30.2
     & & 0.8
     & 0.1
     & 0.5
     \\
    \midrule
    \rowcolor{mygreen}\cellcolor{white}\multirow{3}{*}{FB15k-237} & \cellcolor{myorange}LVSA & \cellcolor{white} & \textbf{47.6} &
    & \textbf{52.4} & \textbf{31.7} 
    & & \textbf{43.9}
    & & \textbf{40.3} &
    & \textbf{51.3} & \textbf{24.9} 
    & & \textbf{38.8}
    & & \textbf{66.1}
    & \textbf{34.4} 
    & \textbf{50.3} \\
    & CLMPT & & 40.8 &
    & 49.3 & 23.8 
    & & 38.0 
    & & 31.9 &
    & 46.9 & 16.7 
    & & 31.8
    & & 41.2
    & 19.0 
    & 30.1 \\
    & ConE & & 34.1 &
    & 38.9 & 16.2
    & & 27.6
    & & 25.2
    & & 33.1 & 10.2
    & & 21.7
    & & 0.7 & 0.1 & 0.4
     \\
    \midrule
    \rowcolor{mygreen}\cellcolor{white}\multirow{3}{*}{NELL995} & \cellcolor{myorange}LVSA & \cellcolor{white} & \textbf{68.3} &
    & \cellcolor{white}61.7 & \textbf{47.1} 
    & & \textbf{59.0} 
    & & \textbf{63.4} &
    & \textbf{61.9} & \textbf{41.2} 
    & & \textbf{55.5}
    & & \textbf{83.4}
    & \textbf{57.6} 
    & \textbf{70.5} \\
    & CLMPT & & 63.8 &
    & \cellcolor{mygreen}\textbf{62.3} & 45.1 
    & & 57.1 
    & & 56.8 &
    & 61.7 & 35.6 
    & & 51.4
    & & 66.2
    & 38.9 
    & 52.6 \\
    & ConE & & 50.4
    & & 42.0 & 19.6
    & & 30.8
    & & 42.8
    & & 37.5 & 12.3
    & & 24.9
    & & 1.1 & 0.2 & 0.7
     \\
    \bottomrule[1.5pt]
  \end{tabular}
  }
\label{tab:exp-interpretability}
\end{minipage}
\end{figure*}

\subsection{Analysis of Differentiable Skolemization Module}
\label{sec:exp-skolem}
This section provides an in-depth analysis of the effectiveness and interpretability of LVSA's differentiable Skolemization module. In an ideal Skolemization-based model, the approximated Skolem functions should identify correct intermediate entities to establish reliable reasoning paths, thereby providing evidential support for the final predictions.

\noindent\textbf{Quantitative Evaluation:} 
To quantitatively assess interpretability, we propose a two-level evaluation framework, illustrated in Fig. \ref{fig:interpretability} using a \textbf{\textit{3p}} query $Q[V_?]=V_?: \exists V_1,V_2: r_1(h,V_1) \wedge r_2(V_1,V_2) \wedge r_3(V_2,V_?)$ as an example. At the variable level, we evaluate whether each existential variable is grounded to a valid intermediate entity. For instance, if $V_1$ is grounded to $e_1$, we verify the satisfiability of the resulting sub-query $\exists V_2, V_?: r_1(h,e_1) \wedge r_2(e_1,V_2) \wedge r_3(V_2,V_?)$. At the path level, we assess whether the fully grounded path, such as $r_1(h,e_1) \wedge r_2(e_1,e_2) \wedge r_3(e_2,e_a)$, forms a valid reasoning chain. Since Skolemization-based methods aim to find at least one solution for existential variables, we employ Precision at $K$ (P@$K$) as a complementary metric to MRR. P@$K$  measures the proportion of correct answers among the top-$K$ predictions: $\text{P@}K = \frac{1}{K} \sum_{i=1}^{K} \mathbf{1}[R_i \in A[Q]]$, where $R_i$ denotes the $i^\text{th}$ answer retrieved by the model.

As shown in Table \ref{tab:exp-interpretability}, LVSA demonstrates superior overall performance compared to the strongest Skolemization-based baseline CLMPT across both evaluation levels. In contrast, the geometric embedding-based method ConE shows poor performance, particularly in generating interpretable reasoning paths. Notably, LVSA's advantage is particularly pronounced in the P@1 metric, indicating its stronger capability in pinpointing correct intermediate answers, whereas CLMPT tends to generate ambiguous intermediate predictions. Although CLMPT achieves performance comparable to LVSA when grounding the first existential variable ($V_1$) in \textbf{\textit{3p}} queries, its performance declines sharply on subsequent variables ($V_2$) and complete reasoning paths. These results suggest that CLMPT's strategy of generating vague intermediate predictions while performing global reasoning over the entire query graph compromises both interpretability and reliability, potentially leading to shortcut reasoning. In contrast, LVSA's substantial advantage at the path level demonstrates its ability to sustain robust multi-hop reasoning by sequentially building on previously inferred results, thereby ensuring both reliable interpretability and causal traceability.

\begin{figure}[t]
\centering
\includegraphics[width=1.\linewidth]{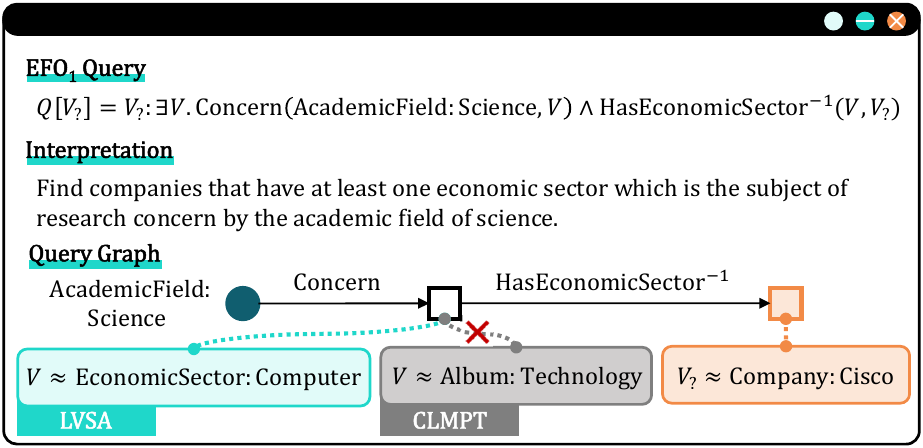}
\caption{A sample of grounding the approximated Skolem function from our LVSA and the baseline of CLMPT.}
\label{fig:case-study}
\end{figure}

\noindent\textbf{Qualitative Evaluation:} A representative case study is shown in Fig. \ref{fig:case-study}, which compares how LVSA and CLMPT ground existential variable embeddings into concrete entities using the similarity metric $V\approx \arg\max_{e\in\mathcal{V}}\phi_E\left( \varphi(V), \varphi(e) \right)$. In this example, LVSA correctly grounds the existential variable to Computer (a science-aligned economic sector), while CLMPT erroneously associates it with a music album. Although both models ultimately arrive at the correct answer Cisco, CLMPT's flawed reasoning process undermines its reliability.

\section{Conclusion}
\label{sec:conclusion}
This work introduces the Grounding–Skolemization dichotomy, a formal framework that systematically exposes the inherent trade-offs in existing CQA methods when balancing logic fidelity and computational efficiency. Unlike prior Skolemization-based approaches relying on geometric embeddings or deep neural networks, we propose LVSA, a novel method built upon Vector Symbolic Architecture (VSA). Owing to its lightweight and transparent design, LVSA achieves high reasoning efficiency while preserving logical consistency. Theoretically and empirically, we demonstrate that the Skolemization-based paradigm offers superior generalization and inference efficiency over the Grounding-based paradigm. In particular, LVSA serves as an innovative and effective implementation that unlocks the full potential of Skolemization-based paradigm.

On the other hand, we note that the satisfiability property of Skolemization may pose a risk of convergence to local optima. Furthermore, LVSA’s differentiable Skolemization module can be further improved in handling compositionally complex queries, suggesting that neural Skolem function approximation remains a challenging yet vital research direction. These findings provide valuable insights and pave the way for future work in CQA over real-life KGs.

\section*{Acknowledgments}
We employed DeepSeek-V3 exclusively for linguistic refinement, including grammar checking and sentence polishing.

\bibliographystyle{IEEEtran}
\bibliography{reference}

\vfill

\end{document}